\theoremstyle{plain}
\newtheorem{theorem}{Theorem}[section]
\newtheorem{lemma}[theorem]{Lemma}
\theoremstyle{definition}
\newtheorem{definition}[theorem]{Definition}
\theoremstyle{remark}
\newcommand{\cell}[2]{\setlength{\tabcolsep}{0pt}\begin{tabular}{#1}#2 \end{tabular}}
\newcommand{\adddatainfo}[6]{\cell{l}{\textds{#1}\\$n={#2} \;\; d={#3}$\\ $|\Omega|={#4} \;\; p={#5}$ \\ {#6}}}
\newcommand{\ficoinfo}[0]{\adddatainfo{heloc}{5842}{43}{155}{22.2\%}{\citet{fico}}}
\newcommand{\givemecreditinfo}{\adddatainfo{givemecredit}{120,268}{23}{715}{7.4\%}{\citet{data2018givemecredit}}}
\newcommand{\twitterbotinfo}{\adddatainfo{twitterbot}{1438}{21}{20}{55.3\%}{\citet{twitterbot}}}
\newcommand{\pitfall}{red!70!black}
\newcommand{\sublevel}{\reflectbox{\rotatebox[origin=c]{180}{$\Rsh$}}}
\newcommand{\textds}{\texttt}
\newcommand{\textfn}{\texttt}
\DeclareMathOperator*{\argmin}{argmin}
\DeclareMathOperator*{\argmax}{argmax}
\newcommand{\textheader}[1]{{\bfseries{#1}}}
\newcommand{\us}{{$\mathsf{ReVer}$\xspace}}
\newcommand{\baseline}[1]{{$\mathsf{#1}$\xspace}}
\newcommand{\flagyes}[0]{\mathtt{Responsive}}
\newcommand{\flagno}[0]{\mathtt{Confined}}
\newcommand{\flagidk}[0]{\bot}
\newcommand{\Verify}[2]{\mathsf{Verify}(#1, #2)}
\newcommand{\yes}{\ding{51}}%
\newcommand{\no}{\ding{55}}%
\newcolumntype{H}{>{\setbox0=\hbox\bgroup}c<{\egroup}@{}}
\newcolumntype{R}[1]{>{\raggedright\arraybackslash}p{#1}}
\newcommand{\maximize}{\mathop{\textup{maximize}}}
\newcommand{\st}{\mbox{\textup{subject to}}}
\icmltitlerunning{Understanding Fixed Predictions via Confined Regions}
\begin{document}

\twocolumn[
\icmltitle{Understanding Fixed Predictions via Confined Regions}
\icmlsetsymbol{equal}{*}
\begin{icmlauthorlist}
\icmlauthor{Connor Lawless}{stanford}
\icmlauthor{Tsui-Wei Weng}{ucsd}
\icmlauthor{Berk Ustun}{ucsd}
\icmlauthor{Madeleine Udell}{stanford}
\end{icmlauthorlist}

\icmlaffiliation{stanford}{Stanford University}
\icmlaffiliation{ucsd}{University of California, San Diego}
\icmlcorrespondingauthor{Connor Lawless}{lawlessc@stanford.edu}

\icmlkeywords{Recourse, Explainability, Actionability, Discrete Optimization}
\vskip 0.3in
]
\printAffiliationsAndNotice{} 

\begin{abstract}
Machine learning models can assign \emph{fixed} predictions that preclude individuals from changing their outcome. Existing approaches to audit fixed predictions do so on a pointwise basis, which requires access to an existing dataset of individuals and may fail to anticipate fixed predictions in out-of-sample data. This work presents a new paradigm to identify fixed predictions by finding \emph{confined regions} of the feature space in which all individuals receive fixed predictions. This paradigm enables the certification of recourse for out-of-sample data, works in settings without representative datasets, and provides interpretable descriptions of individuals with fixed predictions. We develop a fast method to discover confined regions for linear classifiers using mixed-integer quadratically constrained programming. We conduct a comprehensive empirical study of confined regions across diverse applications. Our results highlight that existing pointwise verification methods fail to anticipate future individuals with fixed predictions, while our method both identifies them and provides an interpretable description. 

\end{abstract}

\section{Introduction}

Machine learning is increasingly used in high-stakes settings to decide who receives a loan \citep{hurley2016credit}, a job interview~\citep{bogen2018help}, or an organ transplant~\citep{murgia2023algorithms}. Models predict outcomes using features about individuals, without considering how individuals can change them~\citep[][]{liu2024actionability}. Consequently, models may assign an individual a \emph{fixed prediction} which is not \emph{responsive} to their actions. 

The responsiveness of a model is integral to its safety in settings where predictions map to people. In lending and hiring, fixed predictions may preclude individuals from access to credit and employment. In content moderation, a fixed prediction can ensure that malicious actors are unable to evade detection by manipulating their features. There has been little work that mentions this effect, let alone studies it. Yet fixed predictions arise in practice. Recently, a predictive model used to allocate livers in the United Kingdom was found to preclude all young patients -- no matter how ill -- from receiving a liver transplant~\cite{murgia2023algorithms}. 

Existing approaches to check model responsiveness \citep[see][]{kothari2023prediction} work by verifying recourse on an individual-by-individual basis. These pointwise approaches can only verify recourse for available data, and do not provide guarantees on model responsiveness out-of-sample. In practice, this means that critical issues can only be identified \emph{after} a model has been deployed and it is too late to prevent harms. Moreover, in some settings there may be no data available to run these pointwise procedures due to privacy or because the model is being deployed on a new population. For instance, many interpretable medical and criminal justice scoring systems are available publicly~\citep[see e.g.,][]{morrison2022optimized,yamga2023optimized,ribeiro2023use, PennSentence}, but gaining access to a representative dataset is difficult due to privacy concerns. Without available data, pointwise approaches can only be used after generating synthetic data that may not be representative of the entire population or feature space. Finally, even when individuals without recourse can be identified, it can be hard to determine the root cause of these fixed predictions \citep{rawal2020algorithmic}. 



This work studies a new paradigm in algorithmic recourse that characterizes fixed predictions by verifying recourse over an \emph{entire region} of the feature space (e.g., all plausible job applicants). In contrast to existing pointwise approaches, auditing regions of the feature space can identify \emph{confined regions} (i.e., where all individuals are assigned a fixed prediction), or provide a formal certification of model responsiveness over the entire region. This approach is robust to distribution shifts, can be used without the need for available datasets or synthetic data generation, and provides an interpretable description of individuals with fixed predictions. 

\begin{figure*}
    \centering
    \includegraphics[width=\textwidth]{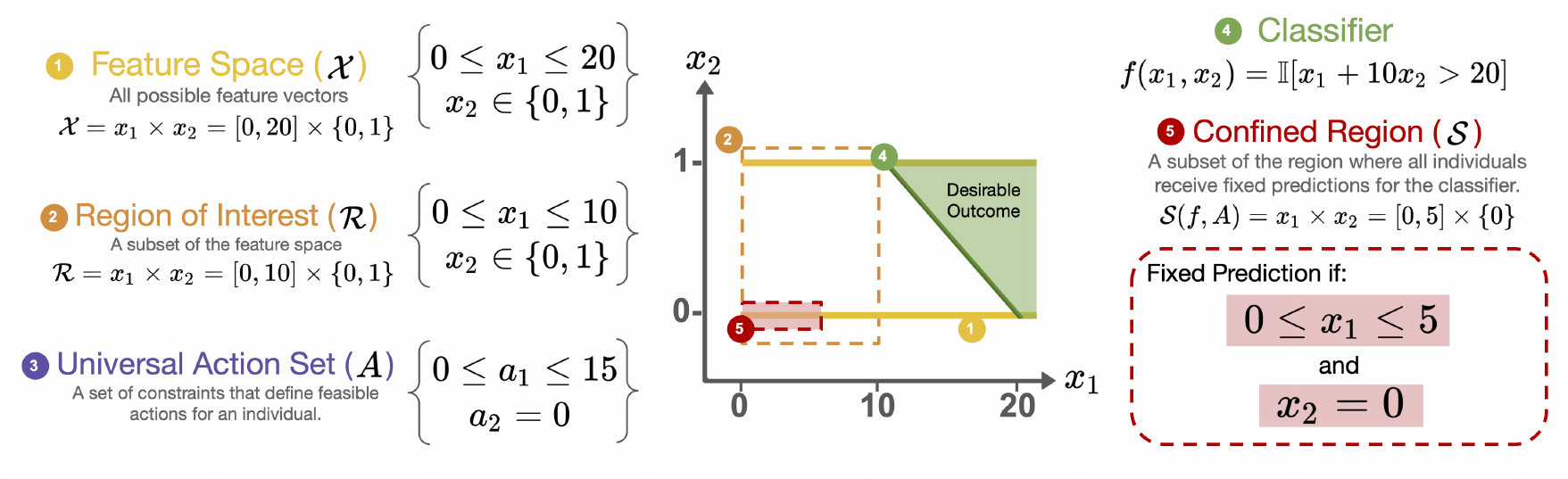}
    \vspace{-2em}
    \caption{Stylized recourse verification problem that aims to find all individuals that are assigned a fixed prediction under a given classifier. The feature space (1) defines all possible feature vectors $\mathbf{x}$. The region of interest (2) is a subset of the feature space representing individuals on which to audit recourse. The action set (3) defines a set of constraints on what actions $\mathbf{a}$ individuals can take to change their prediction under a classifier (4). A confined region (5) is a subset of the region of interest in which all individuals are assigned a fixed prediction, which depends on both the classifier and the action set. In this example, a single confined regions provides an interpretable characterization of all individuals that receive a fixed prediction.}
    \label{fig:summary}
    \vspace{-1em}
\end{figure*}

Verifying recourse for a \emph{single individual} is a non-trivial combinatorial problem that requires performing exhaustive search over a subset of the feature space that captures both the model as well as actionability constraints \cite{ustun2019actionable, kothari2023prediction}. Characterizing all fixed predictions represents an even more intensive setting that requires searching over any plausible individual. Despite these challenges, we develop a fast method to find confined regions by \emph{Mixed-Integer Quadratically Constrained Programming} (MIQCP) to audit linear classifiers. The main contributions of this work include:
\begin{enumerate}[leftmargin=*,itemsep=0pt]
    \item We introduce a new approach to formally verify recourse for linear classifiers 
    over entire regions of the feature space. This tool can be used certify the responsiveness of classifiers beyond data present in a training dataset and can provide stronger guarantees for out-of-sample data. We also present tools to find (or enumerate all) confined regions in the feature space, providing an intuitive tool for model developers to audit and correct problems with model responsiveness.

    \item We develop fast methods that are able to find confined regions via MIQCP. Our approach handles a broad class of actionability constraints, and can verify recourse within seconds on real-world datasets.
    
    \item We evaluate our approach on applications in consumer finance, content moderation, and criminal justice. Our results show that pointwise verification approaches fail to verify model responsiveness over regions, emphasizing the need for tools that audit recourse beyond individual data points. We also showcase the ability of our approach to audit model responsiveness in settings with no public datasets via a case-study on the Pennsylvania criminal justice sentencing risk assessment instrument.

    
\end{enumerate}

\paragraph{Related Work}

Our work introduces a new direction for algorithmic recourse (also known as counterfactual explanations) \cite{venkatasubramanian2020philosophical, karimi2020survey}. We build on a line of work that has focused on generating \emph{actionable} recourse under hard constraints on what actions can be made \cite{ustun2019actionable, russell2019efficient, mahajan2019preserving, mothilal2020explaining, kanamori2021ordered, karimi2020model}. Recent work has highlighted that under \emph{inherent} actionability constraints, ML models may assign fixed predictions that preclude access for individuals \cite{kothari2023prediction, dominguez2022adversarial, karimi2021algorithmic}. While these works have highlighted the problem of fixed predictions, no existing work has attempted to characterize regions in which this phenomenon occurs for a given classifier.

Most of the existing work on algorithmic recourse has focused on providing or verifying recourse for individuals. These approaches fail to provide a high level understanding of recourse (i.e., what actions are needed for what types of individuals) which can be used by stakeholders to interpret and calibrate their trust in the underlying ML model. Motivated by this shortcoming, recent work has studied the problem of generating a global summary of recourse via either mapping individuals to a fixed number of possible actions \cite{lodi2024one, ley2023globe, warren2024explaining}, limiting the number of features that can be used in recourse across all instances \cite{carrizosa2024generating}, or providing an interpretable summary of potential recourse options for different sub-groups \cite{rawal2020beyond}. Our work is fundamentally different from these approaches in that we aim to find and characterize confined regions (i.e., interpretable regions in the feature population without recourse) instead of global actions (i.e., interpretable regions in the action space). Our approach also formally certifies recourse over an entire region, whereas existing approaches provide no out-of-sample guarantees.

More broadly, our work builds on a line of research that has focused on algorithmic recourse as a tool to safeguard access in applications such as hiring and lending. Towards this aim, other work has studied how to generate recourse that is robust to model updates \cite{upadhyay2021towards, forel2024don}, distributional shifts \cite{altmeyer2023endogenous, guo2022rocoursenet, hardt2023algorithmic, o2022toward, rawal2020algorithmic}, imperfect adherence to the prescribed recourse \cite{pawelczyk2022algorithmic, virgolin2023robustness, maragno2024finding}, and causal effects \cite{mahajan2019preserving, karimi2020algorithmic, kleinberg2020classifiers}. 

\section{Problem Statement}
\begin{table*}[t]
    \centering
    \resizebox{\linewidth}{!}{
    \begin{tabular}{ l@{\hspace*{8mm}}HHHR{0.45\linewidth}lR{0.45\linewidth}}
         \textbf{Class} &
         \textbf{Separable} &
         \textbf{Global} &
         \textbf{Discrete} &
         \textbf{Example} &
         \textbf{Features} &
         \textbf{Constraint}
         \\
    \cmidrule(lr){1-7} 

    Immutability&
    \yes & \yes & \no &
    \cell{l}{$\textfn{n\_dependents}$ should not change} &
    $x_j = \textfn{n\_dependents}$ &
    $a_j = 0$ \\
    \cmidrule(lr){1-7} 

    Monotonicity &
    \yes & \yes & \no &
    $\textfn{reapplicant}$ can only increase &
    $x_j = \textfn{reapplicant}$ &
    $a_j \geq 0$ \\
    \cmidrule(lr){1-7} 

    Integrality &
    \yes & \yes & \yes &
    $\textfn{n\_accounts}$ must be positive integer $\leq 10$ &
    $x_j = \textfn{n\_accounts}$ &
    $a_j \in \mathbb{Z} \cap [0 - x_j, 10 - x_j]$  \\
    \cmidrule(lr){1-7} 

    \cell{l}{Categorical Encoding} &
    \no & \yes & \yes &
    \cell{l}{preserve one-hot encoding\\of $\textfn{married},\textfn{single}$} &
    \cell{l}{$x_j = \textfn{married}$\\$x_k = \textfn{single}$} &
    \cell{l}{
    $a_j + x_j \in \{0,1\} \quad x_k + a_k \in \{0,1\}$ \\
    $a_j + x_j  + a_k + x_k = 1$} \\
    \cmidrule(lr){1-7} 

    \cell{l}{Ordinal Encoding} &
    \no & \yes & \yes &
    \cell{l}{preserve one-hot encoding of\\$\textfn{max\_degree\_BS}, \textfn{max\_degree\_MS}$}  &
    \cell{l}{$x_j = \textfn{max\_degree\_BS}$\\$x_k = \textfn{max\_degree\_MS}$} &
    \cell{l}{%
    $a_j + x_j \in \{0,1\} \quad x_k + a_k \in \{0,1\}$ \\
    $a_j + x_j  + a_k + x_k = 1 \quad a_j + x_j \geq a_k + x_k$
    } \\

    \cmidrule(lr){1-7} 

    \cell{l}{Logical Implications} &
    \no & \yes & \yes &
    \cell{l}{%
    if $\textfn{is\_employed}=\textfn{TRUE}$\\
    then $\textfn{work\_hrs\_per\_week} \geq 0 $\\
    else $\textfn{work\_hrs\_per\_week} = 0$}  &
    \cell{l}{%
    $x_j = \textfn{is\_employed}$\\$x_k = \textfn{work\_hrs\_per\_week}$} &
    \cell{l}{%
    $a_j + x_j \in \{0,1\}$\\
    $a_k + x_k \in [0, 168]$\\
    $a_j + x_j\leq 168(x_k + a_k)$%
    }  \\
    \cmidrule(lr){1-7} 

    \cell{l}{Causal Implications} &
    \no & \no & \yes &
    \cell{l}{if $\textfn{years\_of\_account\_history}$ increases\\then $\textfn{age}$ will increase commensurately} &
    \cell{l}{$x_j = \textfn{years\_at\_residence}$\\$x_k = \textfn{age}$} &
    $a_j \leq a_k$ \\
    \end{tabular}

    }
    \caption{Examples of deterministic actionability constraints. Each constraint can be expressed in natural language and modeled using standard tools from mathematical programming~\citep[see e.g.,][]{wolsey2020integer}.} \label{Table::ActionabilityConstraintsCatalog}
    \vspace{-1em}
\end{table*}

We consider a classification task of predicting a label $y \in \{0,1\}$ from a set of $d$ features $\mathbf{x} = [x_1, x_2, \dots, x_d ] \in {\cal X}$ in a bounded feature set ${\cal X}$. We study linear classification models, a broad function class encompassing popular methods such as logistic regression, linearizable rule-based models (e.g., rule sets, decision lists), and concept-bottleneck models \cite{koh2020concept, sun2024concept}. We assume access to the linear classifier $f(\mathbf{x}) = \text{sign}(\mathbf{w}^\top\mathbf{x} + b)$ where $\mathbf{w} \in \mathbb{R}^d$ is the vector of coefficients and $b \in \mathbb{R}$ is the intercept of the classifier. Without loss of generality, we assume $f(\mathbf{x}) = 1$ is the desired outcome (e.g., receiving a loan). We use boldface variables (e.g., $\mathbf{x}$) to denote vectors, and standard text with subscripts (e.g., $x_d$) to denote a specific element of a vector. 

\paragraph{Actionability Constraints}
The recourse verification problem \citep{kothari2023prediction} tests whether an individual $\mathbf{x} \in \mathbb{R}^d$ can obtain the desired outcome of a model by \emph{actions} on their features. Each action is a vector $\mathbf{a} \in \mathbb{R}^{d}$ that shifts the features of the individual to $\mathbf{x} + \mathbf{a} = \mathbf{x}' \in {\cal X}$. We refer to the set of all actions an individual $\mathbf{x}$ can take as the \emph{action set} $A(\mathbf{x})$. In practice, an action set is represented by a set of constraints. \cref{Table::ActionabilityConstraintsCatalog} 
shows sample deterministic actionability constraints represented in both natural language and mathematical formulae that can be embedded into an optimization problem. Actionability constraints can capture inherent limitations on how semantically meaningful features can change (e.g., \textds{age} can only increase) and how those changes impact other features (e.g., increasing \textds{years of account history} also increase \textds{age}).

We summarize all feasible actions that lead to the desirable outcome for an individual in the recourse set.
\begin{definition}
The \emph{recourse set} consists of all feasible actions for an individual $x$ that lead to the desired outcome:%
$$\mathtt{Recourse}(\mathbf{x}, f, A) = \{a: f(\mathbf{x} + \mathbf{a}) = 1, \mathbf{a} \in A(\mathbf{x})\}$$
\end{definition}

We say an individual receives a fixed prediction under a classifier $f$ and action set $A$ if $\mathtt{Recourse}(\mathbf{x}, f, A) = \emptyset$, and has recourse if $|\mathtt{Recourse}(\mathbf{x}, f, A)| \geq 1$


\paragraph{Verification with Regions} This paper studies the problem of verifying recourse over an entire region of the feature space ${\cal R} \subseteq {\cal X}$. This region could represent plausible characteristics of decision subjects (e.g., any loan applicants), or a sub-group of interest (e.g., all Black female loan applicants). 

We start by generalizing the notions of recourse and fixed predictions to regions instead of individual data points.

\begin{definition}
A region ${\cal R}$ is \emph{responsive} under a classifier $f$ and action set $A$ if all individuals within the region have recourse. A region ${\cal R}$ is \emph{confined} if all individuals within the region have a fixed prediction. 
\end{definition}
Note that a region is responsive if \emph{all} individuals within the region have recourse. Similarly, a region is only confined if all individuals are assigned fixed predictions.  If the region contains a mix of individuals with recourse and fixed predictions, the region is neither confined nor fixed.

The goal of the \emph{Region Recourse Verification Problem (RVP)} is to certify whether a given region ${\cal R}$ is confined, responsive, or neither. This task can be cast as an optimization problem that finds the largest confined area ${\cal S}(f, A)$ within the region ${\cal S} \subseteq {\cal R}$. For simplicity, we drop the explicit dependence of ${\cal S}$ on $(f, A)$. Let $\text{Size}({\cal S})$ be a function that quantifies the size of the confined area ${\cal S}$. Given a region ${\cal R}$, a classifier $f$, and an action set $A$, the RVP can be modeled as the following optimization problem:
\begin{align*}
\begin{split}
        \maximize_{{\cal S}}\quad& \text{Size}({\cal S})\\
        \st\quad & \forall x \in {\cal S}: \mathtt{Recourse}(x, f, A)= \emptyset\\
        & {\cal S} \subseteq {\cal R}
\end{split}
\end{align*}

An optimal solution to this optimization problem, ${\cal S}^*$, can be used to directly verify whether the entire region ${\cal R}$ is confined, responsive, or neither:
\begin{align*}
    \Verify{{\cal S}^*}{{\cal R}} = \begin{cases}
        \flagyes, & \textrm{if ${\cal S}^* = \emptyset$} \\
        \flagno,  & \textrm{if ${\cal S}^* = {\cal R}$} \\
        \flagidk{}, & \textrm{otherwise}
    \end{cases}%
\end{align*}

\textbf{Use Cases} Verifying recourse over regions is a powerful tool that can be used to catch potential harms that arise from fixed predictions \emph{before} deploying a model, characterize confined regions, and audit discrimination.

\emph{Detecting Harms before Deployment.} Existing approaches that verify recourse for individual data points may fail to find confined regions before deploying a model, especially in settings with large distribution shifts. This failure can result in tangible harms such as precluding individuals from receiving loans or allowing malicious actors to bypass content filters. Verifying recourse over regions can catch these harms during model development and allow model developers to adjust the model before deployment. 

\emph{Characterizing Confined Regions.} The RVP can be run sequentially to enumerate all confined boxes in a region for a classifier. These boxes are simple to understand and can be used to help model developers debug ML models or provide a high-level summary of confined regions for stakeholders. These confined boxes can also be used to construct a valid (lower) bound on the fraction of the region that is confined. This can be used by model developers as a metric to compare two potential ML models before deployment.

\emph{Data-Free Auditing.} Recourse verification over regions can be used as a tool to find sources of potential discrimination in a model (e.g., individuals that are assigned fixed predictions in a lending application). This approach only requires access to a classifier and a description of the region to audit (which can be as simple as bounds on each feature). This allows external auditors to evaluate the responsiveness of a classifier with \emph{no access to the underlying data}. This is especially powerful in applications where models are publicly available but associated data is not (e.g., criminal justice \cite{PennSentence} and medicine \cite{yamga2023optimized}). 

\paragraph{Pitfalls of Auditing by Observation}
Existing methods for recourse verification \citep[e.g.,][]{kothari2023prediction} can only verify recourse for observed data (e.g., individuals in a training dataset). These point-wise approaches can be used to audit the responsiveness of a region by verifying whether any observed data points are assigned a fixed predictions. However, these approaches may fail to correctly output whether a region is responsive or confined. We outline two kinds of failures:

\begin{definition}
Given a recourse verification task for a region ${\cal R}$, a model $f$, and action set $A$, we say that a method returns a \emph{blindspot} if it outputs that a region is responsive but there exists an individual in the region that is assigned a fixed prediction.
\end{definition}
\begin{definition}
Given a recourse verification task for a region ${\cal R}$, a model $f$, and action set $A$, we say that a method returns a \emph{loophole} if it outputs that a region is confined but there exists an individual in the region with recourse.
\end{definition}

These failure modes can arise when an audit over observed data does not reveal all potential individuals within the region. In Section \ref{sec:exp} we show that this can occur with a variety of different strategies to select data to test within a region.

\section{Verification via Confined Boxes}
\label{sec:method}

Towards formally verifying recourse over an entire region, we formulate a \emph{mixed-integer quadratically constrained program} (MIQCP) to solve the RVP. 

\paragraph{Characterizing Regions with Boxes}
We focus on a special case of the RVP that finds the largest confined \textit{box}. A box is a set defined by simple upper and lower bound constraints on each dimension. Let $U_j = \max_{x \in {\cal R}}x_j$, $L_j = \min_{x \in {\cal R}}x_j$ be the upper and lower bound for each feature $j$ in the region. Given an upper bound, $\mathbf{u} \in \mathbb{R}^d: \mathbf{u} \leq \mathbf{U}$, and lower bound, $\mathbf{l} \in \mathbb{R}^d: \mathbf{l} \geq \mathbf{L}$, a box $B_{\cal R}(\mathbf{u},\mathbf{l})$ is defined as
$
B_{\cal R}(\mathbf{u},\mathbf{l}) = \{\mathbf{x} \in {\cal R}: \mathbf{l} \leq x \leq \mathbf{u}\}
$.
We focus on boxes due their interpretability, which can help model developers understand the source of fixed predictions. Boxes can be viewed as a type of \emph{decision rule}, which have been widely studied for their interpretability within the broader ML community (e.g., \cite{lawless2023interpretable, lawless2022interpretable, lawless2023cluster}). For ease of notation we drop the explicit dependence on ${\cal R}$ and refer to boxes as $B(\mathbf{u}, \mathbf{l})$. We define the size of a box $B(\mathbf{u},\mathbf{l})$ as the sum of the normalized ranges of each feature:%
\vspace{-0.5em}
\begin{equation} \label{def:boxsize}
\text{Size}(B(\mathbf{u}, \mathbf{l})) = \sum_{j=1}^d \frac{u_j - l_j}{U_j - L_j}
\end{equation}

\paragraph{Generating Confined Boxes}
We start by formulating the related problem of auditing whether a given box $B(\mathbf{u}, \mathbf{l})$ in region ${\cal R}$ contains any data points with recourse, which we denote the \emph{Region Recourse Existence Problem (REP)}. Let $\mathbf{x} \in \mathbb{R}^{d-q} \times \mathbb{Z}^q$ be a decision variable representing an individual, and $\mathbf{a} \in \mathbb{R}^{d-q} \times \mathbb{Z}^q$ represent an action. We assume that the region ${\cal R}$, feature space ${\cal X}$, and action set ${\cal A}$ can be represented by a set of constraints over a mixed-integer set (see \cref{fig:summary} for an example). This general assumption encompasses a variety of potential regions and feature sets. We model the REP as a mixed-integer linear program (MILP) over $\mathbf{x}$ and $\mathbf{a}$ (see Appendix \ref{app:rep_form} for details). 

Recall that the RVP can be cast as an optimization problem to find the largest confined region within ${\cal R}$. By definition the REP is infeasible for \emph{every confined box}. To certify that the REP is infeasible for a given box, and by extension certify that the box is confined, we leverage a classical result from linear optimization called Farkas' lemma: 

\begin{theorem}[\citet{farkas}]\normalfont
Let $A \in \mathbb{R}^{m \times n}$ and $b \in \mathbb{R}^m$. Then exactly one of the following two assertions is true:
\begin{enumerate}[label={\Roman*.},leftmargin=*,itemsep=0.1em]
    \item There exists $x \in \mathbb{R}^n$ such that $Ax \leq b$
    \item There exists $y \geq 0$ such that $A^T y = 0$ and $b^\top y = -1$
\end{enumerate}
\end{theorem}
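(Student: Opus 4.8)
The plan is to establish the dichotomy in two halves: first that assertions I and II cannot hold simultaneously (immediate), then that at least one of them always holds (the substantive part), which I would do via a separating-hyperplane argument.

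\emph{The two cannot both hold.} Suppose $x$ satisfies $Ax \le b$ and $y \ge 0$ satisfies $A^\top y = 0$. Then $b^\top y \ge (Ax)^\top y = x^\top (A^\top y) = 0$, so $b^\top y = -1$ is impossible. Hence at most one of I, II holds.

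\emph{If I fails, II holds.} Assume $Ax \le b$ has no solution. Introduce the set
\[
K = \{\, Ax + s \;:\; x \in \mathbb{R}^n,\ s \in \mathbb{R}^m_{\ge 0} \,\} \subseteq \mathbb{R}^m,
\]
which is a convex cone containing the origin. Feasibility of I is exactly the statement $b \in K$, so our hypothesis says $b \notin K$. The one nontrivial ingredient I would need is that $K$ is \emph{closed}: it is the conical hull of a finite generating set (the columns of $A$, their negatives, and the standard basis vectors $e_1,\dots,e_m$), and a finitely generated cone is closed. I would establish this by a Carath\'eodory-type argument — every point of $K$ is a nonnegative combination of a linearly independent subfamily of the generators, so $K$ is a finite union of simplicial cones, each the image of a closed orthant under an injective linear map and hence closed. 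Granting closedness, the separating hyperplane theorem applied to the point $b$ and the closed convex set $K$ yields $y \in \mathbb{R}^m$ with $\sup_{z \in K} y^\top z < y^\top b$. Since $0 \in K$ the supremum is $\ge 0$, so $y^\top b > 0$; and since $K$ is a cone, $y^\top z > 0$ for some $z \in K$ would force $y^\top(tz) \to +\infty$, contradicting finiteness of the supremum, so $y^\top z \le 0$ for all $z \in K$. Taking $z = Ax$ for arbitrary $x$ gives $(A^\top y)^\top x \le 0$ for all $x$, hence $A^\top y = 0$; taking $z = e_i$ gives $y_i \le 0$. Setting $\bar y = -y/(y^\top b)$ then yields $\bar y \ge 0$, $A^\top \bar y = 0$, and $b^\top \bar y = -1$, which is assertion II.

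\emph{Main obstacle and an alternative.} The only genuine obstacle is the closedness of $K$; everything else is bookkeeping. A route that avoids this issue entirely is Fourier–Motzkin elimination: eliminate $x_1,\dots,x_n$ from $Ax \le b$ one at a time, noting that each eliminated system consists of nonnegative combinations of the current inequalities and is feasible in the surviving variables iff the previous one was. After all variables are removed one is left with constant inequalities $0 \le \beta_k$; infeasibility of I means some $\beta_k < 0$, and the nonnegative multipliers producing that inequality, after rescaling, are precisely a vector $y$ witnessing II. I would lead with the hyperplane argument and mention Fourier–Motzkin as an elementary, constructive backup.
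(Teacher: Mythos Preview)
Your proof is correct. The mutual exclusivity argument is clean, and the separating-hyperplane route for the converse is standard and properly executed: the key subtlety---closedness of the finitely generated cone $K$---is identified and handled correctly via the Carath\'eodory decomposition into simplicial cones. The sign bookkeeping in the final rescaling to $\bar y$ checks out. The Fourier--Motzkin alternative you sketch is also valid and, as you note, sidesteps the closedness issue entirely.

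However, there is nothing to compare against: the paper does not prove this theorem. It is stated as a classical result attributed to \citet{farkas} and invoked as a black box to justify the construction of the Farkas Certificate Problem. The paper's own technical contribution lies elsewhere (Theorem~\ref{thm:tum} and Lemma~\ref{lemma:tum}, on total unimodularity of the linear recourse polyhedron). So your proposal supplies a proof the paper deliberately omits; it is sound, but not something the paper engages with at all.
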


Farkas' lemma states that we can certify that a system of inequalities over continuous variables $Ax \leq b$ is infeasible by finding a \emph{Farkas certificate} $y \geq 0$ such that $A^\top y = 0$ and $b^\top y = -1$. In our context, we can thus view the problem of finding a confined box as a joint problem of selecting a box and finding an associated Farkas certificate for the REP. However, Farkas' lemma only applies to \emph{continuous} variables, and the REP can include discrete variables.

We extend Farkas' certificates to the discrete setting using a simple strategy that simultaneously generates certificates for all possible continuous restrictions of the REP. A \emph{continuous restriction} of a MILP is a restricted version of the optimization problem where all discrete variables are fixed to specific values. Note that a box is confined if and only if every continuous restriction of the REP is infeasible.

Let  ${\cal C}$ be the set of continuous restrictions, where each restriction $c \in {\cal C}$ corresponds to a specific set of fixed values for the discrete variables (e.g., $x_1 = 1, x_2 = 2$ for a problem with two discrete variables $x_1, x_2 \in \mathbb{Z}^2$). Note that the set ${\cal C}$ is finite, from the assumption ${\cal R}$ is bounded and only discrete variables are fixed, but grows exponentially with respect to the number of discrete variables. If there are no discrete variables in the REP, there is a single continuous restriction representing the full problem with no fixed values. In settings where there are a large number of discrete variables, enumerating all possible continuous restrictions may become computationally intractable. However, we prove in \cref{sec:scaling} that under very general constraints and minimal assumptions we can relax many if not all of the discrete variables in the REP. Under these new theoretical results, the set of restrictions that the algorithm must consider is often incredibly small (e.g., $|{\cal C}| \leq 4$ for all the datasets and actionability constraints considered in \citet{kothari2023prediction}). 

We formulate a continuous restriction $c \in {\cal C}$ of the REP as a linear program (LP) (see Appendix \ref{app:rep_form}), which we represent in the following standard form:
\begin{align*}
C_c\mathbf{x} + D_c\mathbf{a} \leq b_c(\mathbf{u}, \mathbf{l})
\end{align*}
where $C_c$ and $D_c$ are $m \times d$ matrices and $b_c(u,l)$ is a $m$-dimensional vector that is a linear function of the box upper and lower bounds $\mathbf{u}, \mathbf{l}$. Here $m$ represents the number of constraints in the continuous restriction of the REP.

\paragraph{MIQCP Formulation} We can now formulate the RVP as MIQCP that finds the largest box with Farkas certificates of infeasibility for every continuous restriction. Let $\mathbf{y}_c \in \mathbb{R}^{m}$ be decision variables representing the Farkas certificate for a continuous restriction $c \in {\cal C}$, and $\mathbf{u}, \mathbf{l} \in \mathbb{Z}^d$ represent the upper and lower bounds of a box. Note that there is one variable in $\mathbf{y}$ for every constraint in the continuous restriction. We can now find the largest confined box $B(\mathbf{u}, \mathbf{l})$ with associated certificates of infeasibility $y_c$ for $c \in {\cal C}$ using the \emph{Farkas Certificate Problem (FCP)}:
\begin{subequations}
\allowdisplaybreaks
\begin{align}
	\maximize_{\mathbf{y}_c, \mathbf{u}, \mathbf{l}}\quad&& \sum_d \frac{u_d - l_d}{U_d - L_d} \label{obj:f_size}\\[.1cm]
	\st\quad&& b_c(\mathbf{u}, \mathbf{l})^\top \mathbf{y}_c &= -1 ~~&& \forall c \in {\cal C} \label{const:f_neg_ray}\\
	&& C_c^\top \mathbf{y}_c &= 0 && \forall c \in {\cal C} \label{const:f_dual_feas_a} \\
	&& D_c^\top \mathbf{y}_c &= 0 && \forall c \in {\cal C}\label{const:f_dual_feas_b} \\
	&& \mathbf{y}_c &\geq 0 && \forall c \in {\cal C}\label{const:f_non_neg_y} \\
	&& \mathbf{L} \leq \mathbf{l} \leq \mathbf{u} &\leq \mathbf{U} && \label{const:f_box_bounds} \\
	&& \mathbf{u}, \mathbf{l} &\in \mathbb{Z}^d \label{const:f_ul_int}
\end{align}
\end{subequations}
The objective of the problem is to maximize the size of the box, as defined in Equation \eqref{def:boxsize}. Constraints \eqref{const:f_neg_ray}-\eqref{const:f_non_neg_y} follow from Farkas' lemma and ensure that $y_c$ is a valid certificate of infeasibility for the continuous restriction $c$ of the REP. Constraint \eqref{const:f_box_bounds} ensures the FCP generates a valid box within the region ${\cal R}$. We restrict $\mathbf{u}, \mathbf{l}$ to be integer variables to prevent numerical precision issues when solving this MIQCP in practice. This is not an onerous assumption as any continuous variable $x_j$ with a desired precision $10^{-p}$ can be re-scaled and rounded to an integer variable $10^p x_j$. The problem is quadratically constrained due to the inner product of $b_c(\mathbf{u},\mathbf{l})$ and $\mathbf{y}_c$ in constraint \eqref{const:f_neg_ray}. While MIQCPs are often more computationally demanding than MILPs, the FCP can be solved in seconds on real-world datasets using commercial solvers~\citep[e.g.,][]{achterberg2019gurobi}, as the problem scales with the number of features and actionability constraints (which are typically small) rather than the number of data points in the data set. 

When verifying recourse over a \emph{fixed} box $B(\mathbf{u}, \mathbf{l})$ the FCP can be decomposed into $|{\cal C}|$ problems (solved independently for each continuous restriction). If the FCP is infeasible for any continuous restriction $c$, then the RVP is infeasible for the box. If the FCP is feasible for all continuous restrictions $c \in {\cal C}$, then the box is responsive. Alas, when optimizing over potential boxes, the FCP cannot be decomposed as the variables $\mathbf{u}, \mathbf{l}$ link all the continuous restrictions. 

\paragraph{Generating Multiple Boxes} Solving an instance of the FCP generates a single confined box or certifies that the region is responsive. However, in practice, a given region may contain multiple confined regions. To provide model developers and stakeholders with a comprehensive view of individuals with fixed predictions, the FCP can be run sequentially to enumerate multiple (or all) confined boxes with the region. It does so by iteratively adding \emph{no-good cuts} to exclude previously discovered confined regions from ${\cal R}$ (see \cref{app:multi_boxes} for details).

\subsection{Handling Discrete Variables} \label{sec:scaling}
\begin{table*}[t]
    \centering
    \resizebox{\linewidth}{!}{
    \begin{tabular}{l@{\hspace*{4mm}}R{0.4\linewidth}lR{0.6\linewidth}}
         \textbf{Class} &
         \textbf{Description} &
         \textbf{Formulation} &
         \textbf{Discussion} 
         \\
    \cmidrule(lr){1-4} 
    Integer Bound Constraints&
    Places an integer upper or lower bound on a variable &
    $
L_j \leq  v_j \leq U_j.$
&
    Encompasses a wide range of separable constraints including monotonicity, actionability, and bounds on the action step size \cite{kothari2023prediction} \\
    \cmidrule(lr){1-4} 

    $K$-Hot Constraint &
    Preserves that the unweighted sum of a set of variables $\{v_j\}_{j \in J}$ is at most $K \in \mathbb{Z}$. &
    $
    \sum_{j \in J}  \pm~v_j \leq K.
    $ &
    Generalizes the popular \emph{one-hot encoding} for categorical variables. \\
    \cmidrule(lr){1-4} 

    \makecell{Directional Linkage Constraints}&
    Ensures that one feature, $v_{j}$ is greater than or equal to another feature $v_{k}$ &
    $v_{j} \leq v_{k}.$ &
    Ensures a broad class of non-separable constraints (i.e., constraints that act on multiple features) including thermometer encodings, and deterministic causal constraints (e.g., increasing years of account history implies a commensurate increase in Age). \\
    \cmidrule(lr){1-4} 

    \end{tabular}

    }
    \caption{Linear Recourse Constraints Classes. Variables $v_j$ used in the constraints may represent $x$ variables (i.e., constrain the region), $a$ variables (i.e., constrain the actions), or $x + a$ (i.e., constrain the resulting feature vector). This restricted set of constraints encompasses a broad set of existing actionability constraints considered in previous literature.} \label{tab:linear_recourse_const}
\end{table*}

In the preceding section, the RVP was solved by enumerating and finding Farkas' certificates for all continuous restrictions of the REP. However, this approach scales exponentially with respect to the number of discrete variables in the REP. In this section, we show that under a very broad set of actionability constraints and general assumptions we can relax all the discrete variables in the REP and still verify recourse over the entire region.

\paragraph{Linear Recourse Constraints} 
We consider a restricted set of constraints, which we call \emph{linear recourse constraints} (detailed in \cref{tab:linear_recourse_const}). These constraints include a broad class of actionability constraints such as monotonicity, categorical encodings, and immutability. They can be used to define the feature space ${\cal X}$, the region ${\cal R}$, or the action set $A$. Linear recourse constraints encompass many actionability constraints considered in previous literature including all the constraints in \citep{ustun2019actionable, russell2019efficient,kothari2023prediction}. We denote an action set comprised only of these constraints as \textit{linear recourse constraints}. 

%

%
\paragraph{Key Result} 

\cref{thm:tum} shows that we can recover the solution to the REP by solving a \emph{linear relaxation} if:
\begin{enumerate}[label={A.\arabic*}, itemsep=0pt]
\item 
No variable appears in more than one $K$-hot constraint.\label{a1:onehot} 
\item The directional linkage constraints do not enforce relationships between variables appearing in $K$-hot constraints.\label{a2:directional_linkage}
\item The directional linkage constraints do not imply any circular relationships between variables. \label{a3:cycles}
\end{enumerate}
%
Practically, \cref{thm:tum} shows we can solve the FCP with a single continuous restriction (i.e., $|{\cal C}| = 1$),
relaxing all discrete variables in the problem.
\begin{theorem}\label{thm:tum}
Under Assumptions \ref{a1:onehot}- \ref{a3:cycles}, the linear relaxation of the REP is feasible iff the REP is feasible for any problem with linear recourse constraints.
\end{theorem}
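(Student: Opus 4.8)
The plan is to prove both directions of the equivalence, the forward one being immediate: any point feasible for the REP is feasible for its linear relaxation, since dropping integrality only enlarges the feasible set. The content is the converse, and the guiding idea is to peel the classifier inequality off from the rest of the formulation. First I would write $P$ for the polyhedron in $(\mathbf{x},\mathbf{a})$-space carved out by \emph{every} REP constraint \emph{except} the classifier inequality $\mathbf{w}^\top(\mathbf{x}+\mathbf{a}) + b \ge 0$; that is, $P$ collects the box constraints together with all of the linear recourse constraints describing ${\cal R}$, ${\cal X}$, and $A$. Because ${\cal R}$ (hence the box) and ${\cal X}$ are bounded and $\mathbf{x}+\mathbf{a}$ lies in ${\cal X}$, the action variables are bounded too, so $P$ is a polytope; and all of its right-hand sides ($\mathbf{u}$, $\mathbf{l}$, $\mathbf{L}$, $\mathbf{U}$, the integer bounds $L_j,U_j$, the integers $K$, and the $0$'s of the linkage rows) are integral.

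Next I would recast feasibility as optimization over $P$: the linear relaxation of the REP is feasible iff $\max\{\mathbf{w}^\top(\mathbf{x}+\mathbf{a}) : (\mathbf{x},\mathbf{a})\in P\} \ge -b$, and the REP itself is feasible iff the analogous maximum over the \emph{integer} points of $P$ is $\ge -b$. Thus everything reduces to showing $P$ is an \emph{integral} polytope (all vertices integral): granting that, the first maximum is attained at an integral vertex $(\mathbf{x}^\ast,\mathbf{a}^\ast)$ of $P$, which is among the integer points of $P$, so whenever the relaxation is feasible this vertex already satisfies the classifier inequality and certifies feasibility of the REP. The same argument absorbs the tie-breaking convention for $\mathrm{sign}(0)$: if a strict inequality is wanted, replace $\ge$ by $>$ throughout and the attained integral optimum still witnesses feasibility exactly when the relaxation is feasible.

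The crux is therefore the claim that the constraint matrix of $P$ is \emph{totally unimodular}, which I would establish via Ghouila-Houri's row-partition criterion, using Assumptions \ref{a1:onehot}--\ref{a3:cycles} to rule out the configurations that break total unimodularity. The bound rows are signed unit vectors. By \ref{a1:onehot} the $K$-hot rows have pairwise disjoint supports, so they form a $\{0,\pm1\}$-block with at most one nonzero per column, which is trivially totally unimodular. By \ref{a3:cycles} the directional linkage rows form an acyclic digraph; rewritten in the representation where each such row has exactly two opposite-sign nonzeros, this block is a network matrix, hence totally unimodular. Finally, \ref{a2:directional_linkage} guarantees that no variable is shared between a $K$-hot constraint and a linkage constraint, so the two ``heavy'' blocks act on essentially disjoint column sets; I would then verify the Ghouila-Houri condition for the full matrix by coloring the $K$-hot rows and the linkage rows independently over their disjoint supports and using the single-nonzero bound rows to rebalance any column that would otherwise get a $\pm 2$ contribution. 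Integrality of the right-hand side then forces every vertex of $P$ to be integral.

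The hard part will be the \emph{interaction} of these blocks through the $(\mathbf{x},\mathbf{a})$ encoding rather than any block in isolation: a linear recourse constraint that is structurally simple as a constraint on $x$, on $a$, or on $x+a$ can become, in the joint variables, a row with up to four nonzeros — e.g.\ a directional linkage constraint imposed on $x+a$ reads $x_j - x_k + a_j - a_k \le 0$. I would handle this by a unimodular change of variables, substituting $\mathbf{x}' = \mathbf{x}+\mathbf{a}$ on the affected coordinates (equivalently, eliminating linked variables one at a time in a topological order supplied by \ref{a3:cycles}), so that in the new coordinates every remaining row again has the network-or-interval structure above. The technical heart is checking that this substitution is consistent with \ref{a1:onehot}--\ref{a3:cycles} — that it neither creates a cycle among the linkage constraints nor merges two $K$-hot supports — after which the total-unimodularity argument, and hence the theorem, goes through as stated.
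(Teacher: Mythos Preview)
Your overall architecture matches the paper: strip off the classifier inequality, show the remaining polyhedron $P$ has a totally unimodular constraint matrix (hence integral vertices), and conclude that the linear maximum of $\mathbf{w}^\top(\mathbf{x}+\mathbf{a})$ over $P$ is attained at an integer point. The paper phrases the last step via a convex combination of extreme points rather than LP optimality, but that is cosmetic.

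The gap is in your TU argument. First, you misread Assumption~\ref{a2:directional_linkage}: it does \emph{not} say that $K$-hot and linkage constraints have disjoint column supports --- a variable in a $K$-hot row can still appear in linkage rows; the assumption only limits each linkage-connected component to at most one such variable. So your ``color the two heavy blocks independently over disjoint supports'' step fails. Second, the $(\mathbf{x},\mathbf{a})$ encoding you correctly flag as the hard part is exactly where row-coloring is awkward, and the substitution $\mathbf{x}'=\mathbf{x}+\mathbf{a}$ does not help: it merely moves the four-nonzero rows from constraints on $\mathbf{x}+\mathbf{a}$ to constraints on $\mathbf{a}=\mathbf{x}'-\mathbf{x}$, so you would still have to prove TU in the new coordinates from scratch (and the parenthetical about ``eliminating linked variables in topological order'' does not match this substitution --- linkage constraints are inequalities, not equalities, so nothing can be eliminated). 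The paper instead uses the \emph{column} equitable-bicoloring criterion, with the key first move: whenever both $x_j$ and $a_j$ appear in the column submatrix, color them opposite colors. Every row that constrains $x_j+a_j$ then contributes zero to its row sum for that index, reducing the problem to coloring at most one surviving column per feature index; one then alternates colors within each $K$-hot row and propagates along the (acyclic, by~\ref{a3:cycles}) implication graph, where~\ref{a2:directional_linkage} is used precisely so that propagation meets at most one already-colored $K$-hot column per component. This $(x_j,a_j)$ pairing is the missing idea in your sketch.
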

For a full proof and formal definitions of the assumptions, see \cref{app:tum_pf}. The assumptions for \cref{thm:tum} are general and hold in many realistic settings. For instance, $K$-hot constraints are often used to encode categorical features (e.g., via a one-hot encoding). Assumption \ref{a1:onehot} holds in this setting as each associated variable only corresponds to one encoding (i.e., one $K$-hot constraint). Similarly, Assumption \ref{a2:directional_linkage} holds as long as there are no logical implications between the categorical features. Finally, Assumption \ref{a3:cycles} holds as long as there are no circular implications between variables. Circular implications between variables represent flaws in constructing the action set and should be caught prior to solving the RVP.


\cref{thm:tum} holds under linear recourse constraints
but not under more general constraints. 
In \cref{app:relax_disc} we discuss how to extend our approach to general constraints, and provide practical guidelines on how to select continuous restrictions to include in the FCP.

\section{Experiments} \label{sec:exp}

We present experiments showing that point-wise approaches fail to correctly find confined regions,
while our methods to audit recourse over regions accurately find such regions when they exist. 
Hence our methods can help decision-makers avoid harms from deploying models with fixed predictions.
We include code to reproduce our results at \url{https://github.com/conlaw/confined_regions/} and provide additional details and results in \cref{app:experiments}.

\paragraph{Setup}

We evaluate our approach on three real-world datasets in consumer finance (\textds{heloc} \cite{fico}, \textds{givemecredit}\cite{data2018givemecredit}) and content moderation (\textds{twitterbot} \cite{twitterbot}). 
%
Each dataset include features that admit \emph{inherent} actionability constraints that apply to all individuals (e.g., preserving feature encoding) which we use to construct the action set (see \cref{app:experiments} for details). 
We encode all categorical features using a one-hot encoding and discretize all continuous features. We split the processed dataset into a training sample (50\% used to train the model), and an audit sample (used to evaluate responsiveness in deployment). 

We use the training dataset to fit a $\ell_1$-regularized logistic regression model and tune its parameters via cross-validation.
%
We use the auditing dataset to verify recourse over a set of regions $\Omega$ that represent different sub-groups of interest for the classification task. We generate these regions ${\cal R} \in \Omega$ by restricting the feature space to fixed combinations of immutable characteristics  (e.g., all individuals with a specified age and gender). 
We remove all regions from $\Omega$ that do not have at least 5 data points in the training dataset. The fraction of data points with fixed predictions ($p$) in the each dataset varies between $10-55\%$. We represent the feature space ${\cal X}$ for each dataset as the smallest box containing all available data.

\paragraph{Methods} We compare our method to pointwise baselines that audit recourse over a sample of individual data points to generate outputs for the entire region. Given a sample of individual data points, these point-wise approaches output that a region is responsive (confined) if all data points have recourse (no recourse). We generate different baselines by using different strategies to select which individual data points to include in the sample.
\begin{itemize}[leftmargin=*, itemsep=0pt]
    \item \baseline{Data}: We use all data points from the training dataset that fall within the region.
    \item \baseline{Region}: We sample 100 data points uniformly at random from the region being audited.
    \item \baseline{Score}: We evaluate the data points within the region with the highest and lowest classifier score (i.e. $\argmax_{x} w^T x$ and  $\argmin_{x} w^T x$). 
    \item \us{}: We implement our approach, which we call \textbf{Re}gion \textbf{Ver}ification (\us{}), in Python using Gurobi \cite{achterberg2019gurobi} to solve all MIQCPs. 
\end{itemize}


\paragraph{Results}
\begin{table}[t!]
    \centering
    \resizebox{\linewidth}{!}{\begin{tabular}{@{}llllll@{}}
 & & \multicolumn{3}{c}{\baseline{PointWise}} & \\ \cmidrule{3-5}
\textbf{Dataset} & \textbf{Metrics} & \baseline{Data} & \baseline{Region} & \baseline{Score} & \us{}\\
\toprule
\multirow[c]{6}{*}{\ficoinfo{}} & Certifies Responsive & --- & --- & --- & 54.2\% \\
 & Outputs Responsive & 91.6\% & 66.5\% & 71.0\% & 54.2\% \\
 & \sublevel~Blindspot & \textcolor{\pitfall}{37.4\%} & \textcolor{\pitfall}{12.3\%} & \textcolor{\pitfall}{16.8\%} & \textbf{0.0\%} \\
 & Certifies Confined & --- & --- & --- & 0.0\% \\
 & Outputs Confined & 0.6\% & 0.0\% & 0.0\% & 0.0\% \\
 & \sublevel~Loophole & \textcolor{\pitfall}{0.6\%} & \textbf{0.0\%} & \textbf{0.0\%} & \textbf{0.0\%} \\
\cmidrule{1-6}
\multirow[c]{6}{*}{\givemecreditinfo{}} & Certifies Responsive & --- & --- & --- & 60.1\% \\
 & Outputs Responsive & 72.2\% & 60.1\% & 62.9\% & 60.1\% \\
 & \sublevel~Blindspot & \textcolor{\pitfall}{12.0\%} & \textbf{0.0\%} & \textcolor{\pitfall}{2.8\%} & \textbf{0.0\%} \\
 & Certifies Confined & --- & --- & --- & 18.3\% \\
 & Outputs Confined & 19.4\% & 19.2\% & 19.2\% & 18.3\% \\
 & \sublevel~Loophole & \textcolor{\pitfall}{1.1\%} & \textcolor{\pitfall}{0.8\%} & \textcolor{\pitfall}{0.8\%} & \textbf{0.0\%} \\
\cmidrule{1-6}
\multirow[c]{6}{*}{\twitterbotinfo{}} & Certifies Responsive & --- & --- & --- & 25.0\% \\
 & Outputs Responsive & 40.0\% & 25.0\% & 25.0\% & 25.0\% \\
 & \sublevel~Blindspot & \textcolor{\pitfall}{15.0\%} & \textbf{0.0\%} & \textbf{0.0\%} & \textbf{0.0\%} \\
 & Certifies Confined & --- & --- & --- & 5.0\% \\
 & Outputs Confined & 25.0\% & 25.0\% & 25.0\% & 5.0\% \\
 & \sublevel~Loophole & \textcolor{\pitfall}{20.0\%} & \textcolor{\pitfall}{20.0\%} & \textcolor{\pitfall}{20.0\%} & \textbf{0.0\%} \\
\cmidrule{1-6}
\end{tabular}
}
    \caption{Overview of results for all datasets, regions, and methods. For each dataset, we include the number of regions we audit ($|\Omega|$), and the fraction of data points with fixed predictions ($p$). }
    \label{tab:experimental_results}
\end{table}

We summarize our results in Table \ref{tab:experimental_results} for all datasets and methods. We use \us{} to certify whether each region is responsive, confined, or neither. We use these results to evaluate the reliability of baseline methods. We evaluate each method in terms of the percentage of regions where it: \emph{certifies responsive}, \emph{outputs responsive}, outputs a \emph{blindspot} (i.e., misses individuals in the region with fixed predictions), \emph{certifies confined}, \emph{outputs confined}, and outputs a \emph{loophole} (i.e., misses individuals in the region with recourse). Additional metrics are reported in Appendix \ref{app:experiments}.

\paragraph{On Preempting Harms during Model Development} Our results demonstrate that individualized recourse verification approaches fail to properly predict whether a region is responsive or confined. All baseline approaches result in blindspots, ranging from 2.8\% to 37.4\% of regions, and loopholes, ranging from 0.6\% to 20\% of regions. The baseline approaches incorporate both separable and non-separable actionability constraints. Consequently, these blindspots and loopholes arise from focusing on individual data points instead of the region as a whole and highlight the importance of region-specific methods. In Appendix \ref{app:distribution_shift} we show that these failures are exacerbated in settings whether there is a distribution shift in the test dataset.

Blindspots and loopholes represent failure cases that undermine the benefits of algorithmic recourse and lead to tangible harms if left undetected in model development. Consider a content moderation setting where a machine learning model is used to remove sensitive content. A loophole could represent malicious content (e.g., discriminatory or offensive content) that is able to bypass the filter by changing superficial features of the post. In a consumer finance application, a blindspot represents unanticipated individuals that receive fixed predictions and are precluded from ever getting access to a loan. In both these settings, rectifying the problem after deployment would involve training a new machine learning model (e.g., by dropping features that lead to fixed predictions, or adding new features that promote actionability) which can be time-consuming, costly, and continues to inflict harm while the existing model operates. This highlights the importance of auditing recourse over regions --- it foresees potential harms that occur when deployed models assign fixed predictions.

\begin{figure}
      \centering
      \includegraphics[width=0.49\textwidth]{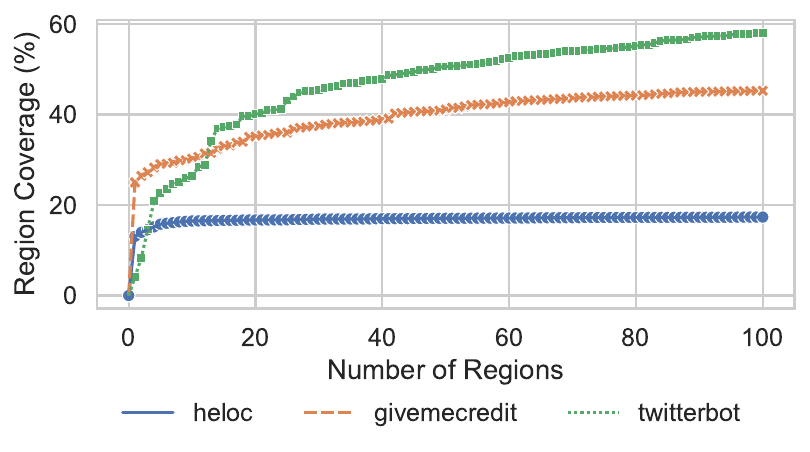}
      \vspace{-1cm}
  \caption {\label{fig:coverage_curve} Percentage of feasible points in the region with a fixed prediction as a function of the number of confined boxes generated. Note that any point is a valid lower bound on the percentage of points with fixed predictions across the entire region. }
\end{figure}

\paragraph{On Characterizing Fixed Predictions}
In \cref{subsec:case} we show sample regions that our algorithm certifies as confined. These regions are represented by simple decision rules (e.g., \textds{Age 21-25} and \textds{Male}) and can help model developers debug the sources of fixed predictions.  For instance, the latter example may prompt a model developer to remove features related to age and gender from the dataset.

In some settings, fixed predictions are rare and can be represented by a small number of regions (see e.g., the case study in Section \ref{subsec:case}). However, in complicated settings there may be a large a number of confined regions.
To demonstrate this phenomenon, we run our approach sequentially to enumerate up to 100 confined regions within our benchmark datasets. For each dataset, we audit a region encompassing any plausible individual (i.e., any individual satisfying indisputable conditions on each feature). Figure \ref{fig:coverage_curve} shows the fraction of the entire region covered after generating up to 100 confined regions using our approach. Note that any point on this curve represents a valid lower bound on the total fraction of the region that is assigned a fixed prediction. These lower bounds can be used by model developers to decide between alternative ML models for a given application. More broadly, our results highlight the scale and difficulty of fully characterizing confined regions. Although each dataset has fewer than 50 features, at least 100  regions are confined. Our results highlight that fixed predictions
arise in complex ways from immutable features. As such, they create an insidious new kind of discrimination: unlike traditional forms of discrimination based on protected characteristics (e.g., race and gender), this form of discrimination is much harder to identify, requires looking at combinations of features, and depends on the classifier.

\paragraph{On Computation}
Remarkably, our approach runs in \emph{under 5 seconds} on average across all datasets (see Appendix \ref{app:experiments}). This shows that although region verification is more computationally challenging than individual verification, our approach can verify regions extremely quickly. On two thirds of the datasets, our approach is even faster than the \baseline{Region} baseline which audits 100 individual data points.

\section{Demonstration} 
\label{subsec:case}

\paragraph{Setup} We showcase the potential of region verification as a tool for auditing discrimination via a case study of the Pennsylvania Criminal Justice Sentencing Risk Assessment Instrument (SRAI) \cite{PennSentence}. The SRAI is a simple linear scoring system that uses 37 features, including age and gender, to predict the risk of an offender committing a re-offense (i.e., criminal recidivism). The guidelines state that an offender is deemed low-risk if the SRAI risk score is under 5 points. 

We audit whether there are any protected groups that are precluded from being predicted as low-risk. To that end, we specify an action set that does not allow changes to protected characteristics (i.e., age and gender), and enforces logical constraints on the features (e.g., to have a prior conviction for violent crime the number of previous convictions needs to be at least one). Full details of the actionability constraints are included in Appendix \ref{app:case_study}. Note that we allow every \emph{unprotected} characteristic (i.e., criminal history) to be mutable. This choice allows us to answer whether any protected group, based solely on protected characteristics, is never predicted to be low risk. 

\paragraph{Results} 
We find two confined regions:

\begin{figure}[!h]
    \centering
    \begin{subfigure}
      \centering
      \includegraphics[width=0.4\textwidth]{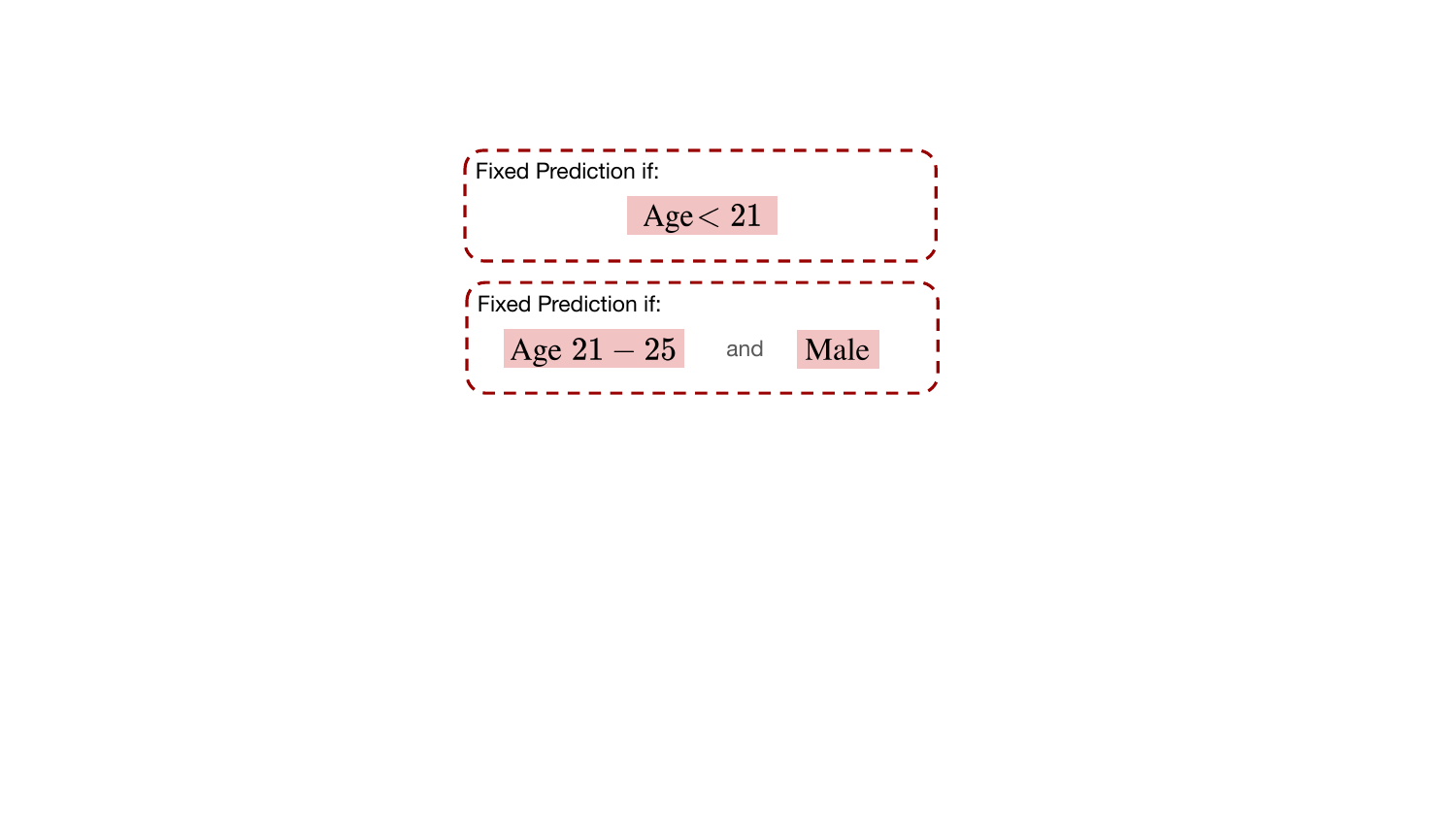}
    \end{subfigure}
\end{figure}

 Our results highlight that the SRAI discriminates against individuals on the basis of gender and age by precluding those groups from ever receiving a desirable outcome. 
 Our tool is able to find these forms of discrimination and present an interpretable summary of the results to stakeholders, who must then gauge whether this form of discrimination should be allowed in the criminal justice context. 
 Notably, our audit required no access to the data that was used to develop the SRAI, underscoring that auditing classifiers for confined regions is possible even when models are publicly available but associated data is not, as in contexts like criminal justice and medicine. 

\section{Concluding Remarks}
Our paper introduces a new paradigm for algorithmic recourse that seeks to characterize fixed predictions by finding confined regions, areas in the feature space where a model is not responsive to individuals' actions. This work highlights that characterizing confined regions can help model developers pre-empt harms that arise from deploying models with fixed predictions. Our work develops a method to tackle this problem by leveraging tools from MIQCP to find confined \emph{boxes} within a region of the feature space. Our method provides interpretable descriptions of confined regions, can be run in seconds for real-world datasets, and enables data-free auditing of model responsiveness. However, these methods should be extended to address the following limitations:

\begin{itemize}[leftmargin=*, itemsep=0pt]
    \item Our methods are designed to work with linear classifiers. In principle, our methods can be extended to any model that can be represented by a MILP. However, many MILP-representable models (e.g., tree ensembles) would require solving a large, computationally-intractable MIQCP and need new algorithmic approaches.
    \item Our approach finds confined \emph{boxes}. Boxes are an interpretable way to characterize regions but have limited expressive power. Future work should explore whether more expressive classes capture confined regions with fewer items.
\end{itemize}

\section*{Impact Statement}
This paper presents work whose goal is to advance the field of Machine Learning. There are many potential societal consequences of our work, none which we feel must be specifically highlighted here.

\section*{Acknowledgements}
MU and CL gratefully acknowledge support from
the National Science Foundation (NSF) Award IIS-2233762, 
the Office of Naval Research (ONR) Awards N000142212825, 
N000142412306, 
and N000142312203, 
IBM, and the Alfred P. Sloan Foundation. BU and LW gratefully acknowledge support from the National Science Foundation (NSF) under award IIS-2313105.

\bibliographystyle{icml2025}
\bibliography{references}

\newpage
\appendix
\onecolumn

{%
\bfseries\Large 
\begin{center}
\toptitlebar{%
Appendix\\[0.1em]%
Understanding Fixed Predictions via Confined Regions%
}\bottomtitlebar
\end{center}%
}

\renewcommand*{\thepage}{\arabic{page}}
\section{REP Integer Programming Formulation} \label{app:rep_form}
Recall the following:
\begin{itemize}
    \item $\mathbf{x} \in \mathbb{R}^{d-q} \times \mathbb{Z}^q$ is a decision variable representing an  individual.
    \item $\mathbf{a} \in \mathbb{R}^{d-q} \times \mathbb{Z}^q$ is a decision variable representing an action.
    \item  $f(\mathbf{x}) = \text{sign}(\mathbf{w}^\top x + b)$ is the linear classifier where $f(\mathbf{x}) = 1$ is the desirable outcome.
    \item ${\cal X}$ represents the feature space.
    \item ${\cal R}$ represents the region of interest.
    \item $A(\mathbf{x})$ represents a set of feasible actions for a given individual $\mathbf{x}$ under the set of actionability constraints.
\end{itemize}

We assume that ${\cal X}$, ${\cal R}$, and $A$ can all be represented via a set of constraints in a MILP optimization model. We formulate the REP for a given region of interest ${\cal R}$ as the following MILP:

\begin{subequations}
\begin{align}
	&& \mathbf{w}^\top(\mathbf{x} + \mathbf{a}) &\geq b ~~&& \label{const:d_classification}\\
	&& \mathbf{x} + \mathbf{a} &\in {\cal X} && \label{const:d_feasible_end} \\
	&& \mathbf{x} &\in {\cal R} && \label{const:d_feasible_x} \\
	&& \mathbf{a} &\in A(x) && \label{const:d_feasible_action} \\
	&& \mathbf{x} &\in B(\mathbf{u}, \mathbf{l}) && \label{const:d_box_constr} \\
	&& \mathbf{x}, \mathbf{a} &\in \mathbb{R}^{d-m} \times \mathbb{Z}^m \label{const:d_binary}
\end{align}
\end{subequations}
Note that this problem has no objective as it is a \emph{feasibility} problem. 
Constraint \eqref{const:d_classification} ensures that the data point $\mathbf{x}$, after taking action $\mathbf{a}$, is classified with the desirable outcome. 
Constraints \eqref{const:d_feasible_x} and \eqref{const:d_feasible_end} ensure that $\mathbf{x}+\mathbf{a}$ if a valid feature vector (i.e., included in ${\cal X}$), and $\mathbf{x}$ is part of the region of interest. 
Constraint \eqref{const:d_feasible_action} ensures that the action $\mathbf{a}$ satisfies the actionability constraints. Finally Constraint \eqref{const:d_box_constr} ensures that $x$ is included in the box $B(u, l)$. Note that any solution to the REP represents a feasible $x \in B(u, l)$ with recourse. Thus the region $B(u,l)$ is \emph{confined} if and only if the REP is infeasible. 

\subsection{Continuous Restriction}
Recall that ${\cal C}$ is the set of \emph{continuous restrictions} of the REP, where a continuous restriction $c \in {\cal C}$ corresponds to the REP with fixed values for the discrete variables (e.g., $x_1 = 1, x_2 = 2$ for discrete variables ${\cal J}_D = \{x_1, x_2\}$). Let $v_j \in {\cal J}_D$ represent a discrete variable, and $s_j$ be its fixed value in continuous restriction $c$. We can incorporate continuous restrictions into the MILP formulation by adding the following additional constraints that fix the values of discrete variables in the formulation:
$$
v_j = s_j ~~\forall v_j \in {\cal J}_D
$$
Note that since all discrete values are fixed, this is not a \emph{linear} program over the $d-q$ continuous variables.


\section{Generating multiple confined boxes} \label{app:multi_boxes}
Solving an instance of the RVP can generate \emph{single} confined box. However in practice, a given region may contain multiple confined boxes. To provide model developers or stakeholders with a holistic view of fixed predictions, the RVP can be run sequentially to enumerate multiple (or all) confined boxes with the region. We do so by adding \textit{no-good cuts} to the FCP that exclude previously discovered regions. This is equivalent to re-solving the RVP for a new region ${\cal R}' \subset {\cal R}$ that excludes existing confined boxes.

Let $\bar{\mathbf{u}}, \bar{\mathbf{l}}$ be an existing confined box. Let decision variables $\mathbf{z}_{u}, \mathbf{z}_{l} \in \{0, 1\}^d$ track whether the new box $\mathbf{u}, \mathbf{l}$ is outside the existing box. We model the no-good cuts that exclude $\bar{\mathbf{u}}, \bar{\mathbf{l}}$ via the following linear constraints:

\begin{subequations}
\begin{align}
	 &u_d \leq \bar{l}_d - 1 + (U_d - \bar{l}_d - 1)(1-z_{ud}) ~~ \forall d \in [d] \label{const:ub}\\
	& l_d \geq \bar{u}_d + 1 + (L_d - \bar{u}_d + 1)(1-z_{ld}) ~~ \forall d \in [d] \label{const:lb}\\
    &\sum_d z_{ud} + z_{ld} \geq 1 \label{const:z_sum} \\
    & \mathbf{z}_{u}, \mathbf{z}_{l} \in \{0, 1\}^d \label{const:z_bin} 
\end{align}
\end{subequations}

Constraint \eqref{const:ub} checks whether the upper bound for feature $d$ in the new box is less than the lower bound for feature $d$ in the existing box. Note that these constraints are enforced if $z_{ud} = 1$. Constraint \eqref{const:lb} checks an analogous condition for the lower bound of feature $d$. Constraints \eqref{const:z_sum}-\eqref{const:z_bin} ensure that at least one constraint of type \eqref{const:ub} or \eqref{const:lb} are active. In other words, it ensures that the new region must fall outside the previous region by ensuring that at least one feature differs in the new box. We exclude constraints for bounds that are tight with the population bounds (i.e., $u_d = U_d$ or $l_d = L_d$) as they are implied in the FCP. Given that our procedure tends to generate sparse regions (i.e., regions that change few features from their population upper and lower bounds), these no good-cuts typically add a very small number of binary variables and constraints to the full formulation.

\section{Proof of Theorem \ref{thm:tum}} \label{app:tum_pf}
We prove this result by showing that the polyhedron defining feasible individuals $\mathbf{x}$ and actions $\mathbf{a}$ under linear recourse constraints is \emph{totally unimodular}, which means that all extreme points of the polyhedron are integral. 
Consequently, the linear relaxation of the REP is feasible 
if and only the discrete REP is feasible.

We start by (re)-introducing important notation:
\begin{itemize}
    \item $\mathbf{x}$ is a decision variable corresponding to an individual with the region ${\cal R}$.
    \item $\mathbf{a}$ is a decision variable corresponding to an action which must be \emph{feasible} under the action set $A(x)$.
    \item $v_j$ represents a set of variables corresponding to feature $j$ (i.e., $x_j, a_j$, or $x_j+a_j$). 
\end{itemize}

Consider the following mixed-integer polyhedron that represents all feature space, region, and actionability constraints. 

\begin{subequations} \label{lr_polyhedron}
\begin{align}
	&& \mathbf{x} + \mathbf{a} &\in {\cal X} && \\
	&& \mathbf{x} &\in {\cal R} && \\
	&& \mathbf{x} &\in B(\mathbf{u}, \mathbf{l}) && \ \\
	&& \mathbf{a} &\in A(x) &&  \\
	&& \mathbf{x}, \mathbf{a} &\in \mathbb{R}^{d-m} \times \mathbb{Z}^m 
\end{align}
\end{subequations}

Recall that for \cref{thm:tum}, we consider a subset of possible constraints used in ${\cal X}, {\cal R}, A$ called \emph{linear recourse} constraints which were introduced in \cref{sec:scaling}. We denote the polyhedron \ref{lr_polyhedron} with only linear recourse constraints as the \emph{linear recourse polyhedron}. We repeat these constraint types here to keep this section self-contained:
\begin{itemize}
\item \textbf{$K$-Hot Constraints:} Let $J_i$ be the set of variables that participate in a K-hot constraint $i$. A K-Hot constraint is:
$$
\sum_{j \in J_i}  \pm~v_j \leq K.
$$
\item \textbf{Unit Directional Linkage Constraints:} This constraint acts on two sets of variables $v_i$, $v_k$ and requires:
$$
v_{j} \leq v_{k}.
$$
\item \textbf{Integer Bound Constraints: } Act on a single variable $v_j$ and require:
$$
L_j \leq  v_j \quad\text{or}\quad v_j \geq U_j
$$
\end{itemize}

Note that in all these constraints, all variable coefficients in these constraints are in $\{0, \pm 1\}$, and thus any matrix comprised of linear recourse constraints is a $\{0, \pm 1\}$-matrix. Let ${\cal J} = \{J_i\}$ be the set of $K$-hot constraints. We define an undirected graph which captures all directional implications between variable indices, which we call the \emph{implication graph}. For every variable index $i$ we create one node in the graph $n_i$ in the implication graph. We create an edge from node $n_j$ to $n_k$ if there exists a unit-directional linkage that acts on $j$ and $k$. Note that this implication graph is could include multiple connected (and potentially cyclic) components.

We now formally define assumptions \ref{a1:onehot}-\ref{a3:cycles}:
\begin{enumerate}[label={A.\arabic*}, itemsep=0pt]
\item No variable appears in more than one $K$-hot constraint. Formally, $$ |J_i \cap J_k| = \emptyset, \quad \forall J_i, J_k \in {\cal J}.$$
\item The directional linkage constraints do not enforce relationships between variables appearing in $K$-hot constraints. Formally, each connected component in the implication graph contains at most one node $n_i$ corresponding to a variable in ${\cup_{J_i \in \cal J} J_i}$.
\item The directional linkage constraints do not imply any bi-directional implications between variables. Formally, the implication graph is acyclic. 
\end{enumerate}

We start by proving the key lemma for our result which states the linear recourse polyhedron is totally unimodular.
\begin{lemma} \label{lemma:tum}
    The linear recourse polyhedron is totally unimodular.
\end{lemma}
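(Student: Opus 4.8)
The plan is to prove Lemma \ref{lemma:tum} by verifying that the constraint matrix $M$ of the linear recourse polyhedron is totally unimodular (TUM), using the well-known combinatorial sufficient condition: a $\{0,\pm 1\}$-matrix is TUM if its rows can be partitioned into two groups $R_1, R_2$ such that every column with two nonzero entries of the same sign has one entry in each group, and every column with two nonzero entries of opposite sign has both entries in the same group (the Ghouila-Houri / Heller-Tompkins criterion). Since all linear recourse constraints have coefficients in $\{0,\pm 1\}$, the matrix is already a $\{0,\pm 1\}$-matrix, so the only work is to exhibit a valid row partition. First I would dispose of the easy rows: each integer bound constraint $L_j \le v_j$ or $v_j \le U_j$ (and each box constraint in $B(\mathbf{u},\mathbf{l})$, which is of this form) has exactly one nonzero entry per row, so it imposes no constraint on the partition and can be placed in $R_1$ arbitrarily.

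The substance is handling the $K$-hot constraints and the directional linkage constraints jointly. A single $K$-hot row $\sum_{j\in J_i}\pm v_j \le K$ has many nonzero entries but is a single row, so within that row there is nothing to balance; the interaction only matters across rows sharing a column. By Assumption \ref{a1:onehot} no variable lies in two $K$-hot constraints, so no column is shared between two $K$-hot rows. A directional linkage row $v_j - v_k \le 0$ has two nonzero entries, $+1$ on $v_j$ and $-1$ on $v_k$ — opposite signs — so the TUM criterion demands both its entries lie in the same group as each other, and moreover must be consistent with whatever group the $K$-hot rows touching $v_j$ or $v_k$ are assigned to. Here Assumption \ref{a2:directional_linkage} is what saves us: each connected component of the implication graph contains at most one node corresponding to a $K$-hot variable, so the linkage constraints never force a $K$-hot column to "see" a same-sign partner via a chain. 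The plan is: put all $K$-hot rows in $R_1$; then for the directional linkage rows, observe that a column $v_j$ can appear in multiple linkage rows (e.g. $v_i \le v_j$ and $v_j \le v_k$), giving $+1$ in one row and $-1$ in another — again opposite signs, so those two linkage rows must be in the same group. Propagating this "same group" requirement along edges of the (undirected) implication graph, Assumption \ref{a3:cycles} (the graph is acyclic, i.e. a forest) guarantees the requirement is consistent — on a tree one can always $2$-color edges to satisfy any prescribed local constraints of this type, and in fact the simplest choice of putting every linkage row into $R_1$ as well works, since an opposite-sign shared column only ever requires "same group" and never "different group." One then checks the remaining case: a column shared between a $K$-hot row and a linkage row. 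Such a column $v_j$ has coefficient $\pm 1$ in the $K$-hot row and $\pm 1$ in the linkage row; whether same or opposite sign, putting both rows in $R_1$ is fine for opposite-sign columns, and for same-sign columns we need the two rows split — but Assumption \ref{a2:directional_linkage} ensures that if $v_j$ is in a $K$-hot constraint it is the unique such variable in its implication-graph component, so we have freedom to place that one component's linkage rows in $R_2$ if needed, without colliding with any other $K$-hot constraint.

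Concretely I would organize the argument as follows: (1) reduce to showing $M$ satisfies the Ghouila-Houri row-partition criterion; (2) set $R_2=\emptyset$ as a first attempt and check it works unless some column has two same-sign nonzeros; (3) classify the same-sign-column cases — they can only arise between a $K$-hot row and a linkage row, or between two linkage rows, since two $K$-hot rows share no column (A.1) and bound rows are singletons; (4) show the "same group / different group" requirements these columns impose, traced through the implication graph, form a consistent set of constraints precisely because the graph is a forest (A.3) and each component meets at most one $K$-hot variable (A.2), so a consistent $2$-coloring of the rows exists; (5) conclude $M$ is TUM, hence the linear recourse polyhedron has only integral vertices, and therefore the LP relaxation of the REP (whose feasible region is the intersection of this polyhedron with the single linear classification inequality — wait, that extra row is a general row and could break TUM) — so in fact the cleaner route, which I expect the authors take, is to prove TUM of the recourse polyhedron \emph{without} the classifier row, and then argue feasibility of the REP reduces to feasibility of a system where integrality of the relevant polyhedron transfers: since the classifier constraint is a single halfspace, one uses that the projection / a vertex-based feasibility argument still yields an integral feasible point whenever a fractional one exists. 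The main obstacle I anticipate is exactly this last point — cleanly combining the TUM of the recourse polyhedron with the non-TUM classifier inequality so that feasibility of the relaxation still implies feasibility of the integer program — together with the bookkeeping in step (4) of verifying that the graph-theoretic assumptions really do rule out every inconsistent coloring requirement. The TUM verification itself is mostly mechanical once the row partition is set up; the care is entirely in the case analysis of shared columns and in the handoff to the REP feasibility statement.
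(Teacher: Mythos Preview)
Your approach has a genuine gap that stems from misapplying the sufficient condition. The criterion you invoke --- a single partition of rows into $R_1,R_2$ so that every column with two nonzero entries of the same sign has them split and every column with two nonzero entries of opposite sign has them together --- is the Heller--Tompkins condition, which is only sufficient when every column has \emph{at most two} nonzero entries. You never verify that hypothesis, and it fails here: a single variable can appear simultaneously in upper and lower bound rows, in a $K$-hot row, and in several directional-linkage rows. With three or more nonzeros in a column, a single global row partition is not enough; Ghouila--Houri requires an equitable bicoloring for \emph{every} row (or column) subset, not just one fixed partition.

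There is a second, related oversight: the linear recourse polyhedron is defined over the variables $(\mathbf{x},\mathbf{a})$, not over abstract $v_j$'s. A bound constraint on $v_j=x_j+a_j$ has two nonzero entries in its row, a directional linkage $v_j\le v_k$ has four, and a $K$-hot row on $|J|$ indices has $2|J|$. Your count of ``exactly one nonzero'' for bound rows and ``two nonzero entries'' for linkage rows is off, and the case analysis you build on it collapses. The paper handles exactly this issue by working with \emph{column} bicolorings: its first step colors $x_j$ red and $a_j$ blue whenever both are present, so the pair cancels in every row that uses $x_j+a_j$, effectively collapsing each index to a single remaining column before the $K$-hot and implication-graph arguments come in. That reduction is the crux you are missing, and it is why the paper applies Ghouila--Houri on columns rather than attempting a row partition. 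Your final worry about the classifier row is also misplaced: Lemma~\ref{lemma:tum} concerns only the recourse polyhedron without the classifier constraint; the half-space intersection is dealt with separately in the proof of Theorem~\ref{thm:tum} via a vertex/convex-combination argument, which is precisely the ``handoff'' you anticipate but is not part of this lemma.
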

\begin{proof}
We prove this result by showing that every column submatrix of the linear recourse polyhedron admits an equitable bicoloring \citep{Ipref}. We consider a linear recourse polyhedron comprised exclusively of constraints where $v_j = x_j + a_j$, which we denote with the matrix Z. Note that any constraint where $v_j = x_j$ or $v_j = a_j$ uses a subset of the variables $\{x_j, a_j\}$ and thus represents a column sub-matrix of this general case. In other words, proving our result in this special case also proves the result for settings where constraints may have $v_j = x_j$ or $v_j = a_j$.

Consider the following coloring scheme for any column submatrix of Z. 
\begin{enumerate}
\item For each index $j$, if columns corresponding to both $x_j$ and $a_j$ are included in the submatrix, color the column corresponding to $x_j$ red and the column corresponding to $a_j$ blue. 

\item For each $K$-hot constraint $i$, let $\bar{J}_i \subseteq J_i$ be the remaining variables in the constraint, whose columns are included in the submatrix but have not yet been colored. Alternate coloring variables in $\bar{J}_i$ red and blue.

\item Consider the implication graph created by the directional constraints. Remove any node corresponding to an index $j$ that contains no variables selected in the submatrix, and any index $j$ where all variables corresponding to the index that are present in the submatrix have already been colored in Step 1. Note that every node $n_j$ now corresponds to exactly one column (i.e., $x_j$ or $a_j$). For each connected component of the revised implication graph, pick an initial node as follows. If there is a node $n_j$ corresponding to a variable in $\cup_{J_i \in {\cal J}} J_i $ in the component, select it as the initial node. Note that $n_j$ must have already been assigned a color in Step 2. If no such node exists, select an arbitrary node as the initial node. If the initial node corresponds corresponds to a column that has not yet been colored, color the corresponding column arbitrarily. We now traverse the connected component coloring columns as follows. Given a current node $n_j$ with color $c$, color all nodes $n_k$ connected to it that are uncolored with the opposite color. Repeat until all nodes in the connected component are colored.


\item Note that any remaining uncolored columns must correspond to a single variable $x_j$ or $a_j$ that only participate in integer bound constraints. Color each column arbitrarily.
\end{enumerate}

Note that assumptions \ref{a1:onehot}- \ref{a3:cycles} ensure that this is a valid coloring (i.e., we always assign a column exactly one column). Specifically, \cref{a1:onehot} ensures that a column corresponding to a variable $x_j$ or $a_j$ is never assigned more than one color in Step 2. Similarly, \cref{a2:directional_linkage} and \cref{a3:cycles} ensures that each node in the revised implication graph is assigned exactly one color, and that every pair of columns connected with associated nodes in the revised implication is assigned different colors.

We now show that this coloring is equitable (i.e., the sum of the columns colored red minus the sum of the columns colored blue differ by at most one for each row). We denote the sum of the columns colored red minus the sum of the columns colored blue for each row as the \emph{row sum}. Since each row corresponds to a single constraint, we show this result for each constraint class separately:
\begin{itemize}
    \item \textbf{$K$-hot Constraints}: This follows directly from Step 1 and Step 2.
    \item \textbf{Unit-Directional Linkage Constraints}: Consider a generic sub-matrix of $Z$. If the row corresponding to a directional linkage constraint $i$ that operates on index $j$ and $k$ has columns corresponding to $x_j$ and $a_j$ ($x_k$ and $a_k$) in the the submatrix, Step 1 ensures the net sum for columns for variables corresponding to $j$ ($k$) is 0. If after Step 1, there is exactly one remaining uncolored column in the row of the submatrix then the entire row sum is $\pm 1$. If there are two uncolored columns then it must correspond to one variable (i.e., $x_j$ or $a_j$) from index $j$ and one from index $k$. Step 3 ensures that these are assigned different colors. Thus in all cases, the row sum is $0$ or $\pm 1$.
    \item \textbf{Integer Bound Constraints}: Every corresponding to an integer bound constraint in the submatrix includes columns correspond to either $x_j$ AND $a_j$, $x_j$, or $a_j$. In the first case, Step 1 ensures that the row sum is $0$. In the latter case, the row sum will be $\pm 1$ depending on the color selected in Step 4.
\end{itemize}

\end{proof}

We now prove the full result of Theorem \ref{thm:tum}. 
\begin{proof}
By \cref{lemma:tum}, under Assumptions \ref{a1:onehot}- \ref{a3:cycles} the linear recourse polyhedron is totally unimodular. This means that every extreme point of the polyhedron is integral and corresponds to feasible \emph{integer} vectors $\mathbf{x}, \mathbf{a}$. 

The REP with linear recourse constraints corresponds to the linear recourse polyhedron with an addition linear constraint (i.e., the linear recourse polyhedron intersected with a half-space). We now show that the REP is feasible iff the linear relaxation of the REP is feasible.

\paragraph{$\Rightarrow$ (REP is feasible implies the linear relaxation of the REP is feasible)}
This follows from the fact that the latter problem is a relaxation of the first problem.

\paragraph{$\Leftarrow$ (The linear relaxation of the REP is feasible implies the REP is feasible)}
Intuitively, this follows from the fact that the REP is the linear recourse polyhedron intersected with a halfspace. If there are any feasible data points in the REP there must be at least one feasible extreme point (which represents a solution to the REP).

Formally, consider a feasible solution to the relaxed REP $\mathbf{v}$ (note that this includes both $\mathbf{x}, \mathbf{a}$ for ease of notation). This solution must be a feasible point in the linear recourse polyhedron (otherwise it would contradict it being a feasible solution). We can represent any feasible point in the linear recourse polyhedron as a convex combination of extreme points of the polyhedron $\{\mu_k\}_k$ by the Minkowski representation theorem:
$$
\mathbf{v} = \sum_k \lambda_k \mu_k \quad\text{s.t. } \sum_k \lambda_k = 1, \lambda_k \geq 0 ~\forall k
$$
Constraint \eqref{const:d_classification} in the REP implies:
\begin{align*}
b \leq \mathbf{w}^\top \mathbf{v} 
= \mathbf{w}^\top (\sum_k \lambda_k \mu_k) 
= \sum_k \lambda_k \mathbf{w}^\top \mu_k
\end{align*}
Since $\lambda_k \geq 0$ this means that there is at least one extreme point $\mu_k$ such that $ \mathbf{w}^\top \mu_k \geq b$, and thus at least one feasible solution to the discrete REP.
\end{proof}
\section{Practical Guidelines for Selecting Continuous Restrictions for the FCP} \label{app:relax_disc}

Unfortunately, Theorem \ref{thm:tum} does not hold directly under a broader set of constraints. Consider the following simple example over two features $x_1, x_2$. Let $x_1$ be a binary variable, and $x_2$ be an integer variable bounded between $0$ and $10$. Add one directional linkage constraint such that increasing $x_1$ by one unit causes $x_2$ to decrease by 10 units. The classifier assigns the desirable outcome if $x_1 + a_1 \geq 0.5$. Consider the region ${\cal R} = x_1 \times x_2 = \{0\} \times [5,10]$. Every individual in the region has \emph{continuous recourse} by setting $a_1 = 0.5$. However, in the discrete version of the REP, which requires $a_1 \in \{0,1\}$, no individual has recourse because setting $a_1 = 1$ would require violating the bound constraint $x_2 + a_2 \geq 0$.

One strategy to handle general constraints is to restrict a subset of discrete variables in the REP, such that each resulting continuous restriction meets the conditions of \cref{thm:tum}.
In practice, this strategy leads to a small number of continuous restrictions. For example, consider the \texttt{heloc} dataset used in our experiments, which has 42 binary features, one integer feature, and 20 non-separable constraints. It contains two constraints that are not linear recourse constraints:
\begin{itemize}
    \item Actions on \textfn{YearsSinceLastDelqTrade$\leq$3} will a 3 unit change in \textfn{YearsOfAccountHistory}
    \item Actions on \textfn{YearsSinceLastDelqTrade$\leq$5} will a 5 unit change in \textfn{YearsOfAccountHistory}
\end{itemize}
Note that these are not linear recourse constraints because a unit change in one variable results in a non-unit change in another. There are over $2^{42}$ possible continuous restrictions which makes solving the full FCP over every restriction computationally intractable. However, restricting \textfn{YearsSinceLastDelqTrade$\leq$3} and \textfn{YearsSinceLastDelqTrade$\leq$5} (i.e., fixing each variable to either $0$ or $1$) transforms the two violating constraints into integer bound constraints. This only generates four continuous restrictions ($2 \times 2$) but still allows the FCP to leverage \cref{thm:tum}. 

This strategy is sufficient for many actionability constraints available with public implementations. For example, all constraints implemented in the \textds{Reach} package \cite{kothari2023prediction} (e.g., if-then constraints, directional linkage constraints with non-unit scaling factors) only require restricting one discrete variable. An alternative approach in settings with a large number of non-separable constraints over discrete variables is to enumerate feasible feature vectors for the inter-connected discrete variables and re-formulate them as one feature which represents a categorical encoding over all possible values \citep[similar to the approach detailed in ][]{ustun2019actionable}. 

In settings where this is not possible or does not dramatically reduce the number of discrete variables our approach can also be run with a single continuous restriction corresponding to a linear relaxation of the original REP. Solving this \emph{relaxed} FCP can be used to provide weaker guarantees:
\begin{itemize}
    \item If the relaxed version of the FCP returns a confined box, this box is confined for the discrete version of the problem (any individual without continuous recourse also does not have recourse in the discrete problem).
    \item If the relaxed FCP certifies that the entire region is confined (i.e., all data points are assigned fixed predictions), then the entire region is guaranteed to be confined in the discrete version of the problem.
    \item Unfortunately, if the relaxed version of the FCP is infeasible there is no guarantee that the entire region is responsive. 
\end{itemize}

\section{Supplementary Material for Experiments} \label{app:experiments}
\newlist{constraints}{enumerate}{3}
\setlist[constraints]{label={\arabic*.}}

In this section we present all the actionability constraints for the datasets used in our experiments. Note that the feature space ${\cal X}$ for each dataset has the same upper and lower bounds as well as non-separable constraints as the action set. All regions we audit in this paper are represented by boxes with fixed values for immutable features.

\subsection{Actionability Constraints for the \textds{heloc} Dataset} 

We show a list of all features and their separable actionability constraints in \cref{Table::IndividualActionSetHeloc}.
\begin{table}[!h]
\centering
\fontsize{9pt}{9pt}\selectfont
\resizebox{0.75\linewidth}{!}{\begin{tabular}{llllll}
\toprule
\textheader{Name} & \textheader{Type} & \textheader{LB} & \textheader{UB} & \textheader{Actionability} & \textheader{Sign} \\
\midrule
\textfn{ExternalRiskEstimate$\geq$40} & $\{0,1\}$ & 0 & 1 & No &  \\
\textfn{ExternalRiskEstimate$\geq$50} & $\{0,1\}$ & 0 & 1 & No &  \\
\textfn{ExternalRiskEstimate$\geq$60} & $\{0,1\}$ & 0 & 1 & No &  \\
\textfn{ExternalRiskEstimate$\geq$70} & $\{0,1\}$ & 0 & 1 & No &  \\
\textfn{ExternalRiskEstimate$\geq$80} & $\{0,1\}$ & 0 & 1 & No &  \\
\textfn{YearsOfAccountHistory} & $\mathbb{Z}$ & 0 & 50 & No &  \\
\textfn{AvgYearsInFile$\geq$3} & $\{0,1\}$ & 0 & 1 & Yes & + \\
\textfn{AvgYearsInFile$\geq$5} & $\{0,1\}$ & 0 & 1 & Yes & + \\
\textfn{AvgYearsInFile$\geq$7} & $\{0,1\}$ & 0 & 1 & Yes & + \\
\textfn{MostRecentTradeWithinLastYear} & $\{0,1\}$ & 0 & 1 & Yes &  \\
\textfn{MostRecentTradeWithinLast2Years} & $\{0,1\}$ & 0 & 1 & Yes &  \\
\textfn{AnyDerogatoryComment} & $\{0,1\}$ & 0 & 1 & No &  \\
\textfn{AnyTrade120DaysDelq} & $\{0,1\}$ & 0 & 1 & No &  \\
\textfn{AnyTrade90DaysDelq} & $\{0,1\}$ & 0 & 1 & No &  \\
\textfn{AnyTrade60DaysDelq} & $\{0,1\}$ & 0 & 1 & No &  \\
\textfn{AnyTrade30DaysDelq} & $\{0,1\}$ & 0 & 1 & No &  \\
\textfn{NoDelqEver} & $\{0,1\}$ & 0 & 1 & No &  \\
\textfn{YearsSinceLastDelqTrade$\leq$1} & $\{0,1\}$ & 0 & 1 & Yes &  \\
\textfn{YearsSinceLastDelqTrade$\leq$3} & $\{0,1\}$ & 0 & 1 & Yes &  \\
\textfn{YearsSinceLastDelqTrade$\leq$5} & $\{0,1\}$ & 0 & 1 & Yes &  \\
\textfn{NumInstallTrades$\geq$2} & $\{0,1\}$ & 0 & 1 & Yes & + \\
\textfn{NumInstallTrades$\geq$3} & $\{0,1\}$ & 0 & 1 & Yes & + \\
\textfn{NumInstallTrades$\geq$5} & $\{0,1\}$ & 0 & 1 & Yes & + \\
\textfn{NumInstallTrades$\geq$7} & $\{0,1\}$ & 0 & 1 & Yes & + \\
\textfn{NumInstallTradesWBalance$\geq$2} & $\{0,1\}$ & 0 & 1 & Yes & + \\
\textfn{NumInstallTradesWBalance$\geq$3} & $\{0,1\}$ & 0 & 1 & Yes & + \\
\textfn{NumInstallTradesWBalance$\geq$5} & $\{0,1\}$ & 0 & 1 & Yes & + \\
\textfn{NumInstallTradesWBalance$\geq$7} & $\{0,1\}$ & 0 & 1 & Yes & + \\
\textfn{NumRevolvingTrades$\geq$2} & $\{0,1\}$ & 0 & 1 & Yes & + \\
\textfn{NumRevolvingTrades$\geq$3} & $\{0,1\}$ & 0 & 1 & Yes & + \\
\textfn{NumRevolvingTrades$\geq$5} & $\{0,1\}$ & 0 & 1 & Yes & + \\
\textfn{NumRevolvingTrades$\geq$7} & $\{0,1\}$ & 0 & 1 & Yes & + \\
\textfn{NumRevolvingTradesWBalance$\geq$2} & $\{0,1\}$ & 0 & 1 & Yes & + \\
\textfn{NumRevolvingTradesWBalance$\geq$3} & $\{0,1\}$ & 0 & 1 & Yes & + \\
\textfn{NumRevolvingTradesWBalance$\geq$5} & $\{0,1\}$ & 0 & 1 & Yes & + \\
\textfn{NumRevolvingTradesWBalance$\geq$7} & $\{0,1\}$ & 0 & 1 & Yes & + \\
\textfn{NetFractionInstallBurden$\geq$10} & $\{0,1\}$ & 0 & 1 & Yes & + \\
\textfn{NetFractionInstallBurden$\geq$20} & $\{0,1\}$ & 0 & 1 & Yes & + \\
\textfn{NetFractionInstallBurden$\geq$50} & $\{0,1\}$ & 0 & 1 & Yes & + \\
\textfn{NetFractionRevolvingBurden$\geq$10} & $\{0,1\}$ & 0 & 1 & Yes &  \\
\textfn{NetFractionRevolvingBurden$\geq$20} & $\{0,1\}$ & 0 & 1 & Yes &  \\
\textfn{NetFractionRevolvingBurden$\geq$50} & $\{0,1\}$ & 0 & 1 & Yes &  \\
\textfn{NumBank2NatlTradesWHighUtilization$\geq$2} & $\{0,1\}$ & 0 & 1 & Yes & + \\
\bottomrule
\end{tabular}
}
\caption{Separable actionability constraints for the \textds{heloc} dataset.}
\label{Table::IndividualActionSetHeloc}
\end{table}

The non-separable actionability constraints for this dataset include:
\begin{constraints}
\item DirectionalLinkage: Actions on \textfn{NumRevolvingTradesWBalance$\geq$2} will induce to actions on \textfn{NumRevolvingTrades$\geq$2}.Each unit change in \textfn{NumRevolvingTradesWBalance$\geq$2} leads to:1.00-unit change in \textfn{NumRevolvingTrades$\geq$2}
\item DirectionalLinkage: Actions on \textfn{NumInstallTradesWBalance$\geq$2} will induce to actions on \textfn{NumInstallTrades$\geq$2}.Each unit change in \textfn{NumInstallTradesWBalance$\geq$2} leads to:1.00-unit change in \textfn{NumInstallTrades$\geq$2}
\item DirectionalLinkage: Actions on \textfn{NumRevolvingTradesWBalance$\geq$3} will induce to actions on \textfn{NumRevolvingTrades$\geq$3}.Each unit change in \textfn{NumRevolvingTradesWBalance$\geq$3} leads to:1.00-unit change in \textfn{NumRevolvingTrades$\geq$3}
\item DirectionalLinkage: Actions on \textfn{NumInstallTradesWBalance$\geq$3} will induce to actions on \textfn{NumInstallTrades$\geq$3}. Each unit change in \textfn{NumInstallTradesWBalance$\geq$3} leads to:1.00-unit change in \textfn{NumInstallTrades$\geq$3}
\item DirectionalLinkage: Actions on \textfn{NumRevolvingTradesWBalance$\geq$5} will induce to actions on \textfn{NumRevolvingTrades$\geq$5}.Each unit change in \textfn{NumRevolvingTradesWBalance$\geq$5} leads to:1.00-unit change in \textfn{NumRevolvingTrades$\geq$5}
\item DirectionalLinkage: Actions on \textfn{NumInstallTradesWBalance$\geq$5} will induce to actions on \textfn{NumInstallTrades$\geq$5}.Each unit change in \textfn{NumInstallTradesWBalance$\geq$5} leads to:1.00-unit change in \textfn{NumInstallTrades$\geq$5}
\item DirectionalLinkage: Actions on \textfn{NumRevolvingTradesWBalance$\geq$7} will induce to actions on \textfn{NumRevolvingTrades$\geq$7}.Each unit change in \textfn{NumRevolvingTradesWBalance$\geq$7} leads to:1.00-unit change in \textfn{NumRevolvingTrades$\geq$7}
\item DirectionalLinkage: Actions on \textfn{NumInstallTradesWBalance$\geq$7} will induce to actions on \textfn{NumInstallTrades$\geq$7}.Each unit change in \textfn{NumInstallTradesWBalance$\geq$7} leads to:1.00-unit change in \textfn{NumInstallTrades$\geq$7}
\item DirectionalLinkage: Actions on \textfn{YearsSinceLastDelqTrade$\leq$1} will induce to actions on \textfn{YearsOfAccountHistory}. Each unit change in \textfn{YearsSinceLastDelqTrade$\leq$1} leads to:-1.00-unit change in \textfn{YearsOfAccountHistory}
\item DirectionalLinkage: Actions on \textfn{YearsSinceLastDelqTrade$\leq$3} will induce to actions on \textfn{YearsOfAccountHistory}. Each unit change in \textfn{YearsSinceLastDelqTrade$\leq$3} leads to:-3.00-unit change in \textfn{YearsOfAccountHistory}
\item DirectionalLinkage: Actions on \textfn{YearsSinceLastDelqTrade$\leq$5} will induce to actions on \textfn{YearsOfAccountHistory}. Each unit change in \textfn{YearsSinceLastDelqTrade$\leq$5} leads to:-5.00-unit change in \textfn{YearsOfAccountHistory}
\item ThermometerEncoding: Actions on [\textfn{YearsSinceLastDelqTrade$\leq$1}, \textfn{YearsSinceLastDelqTrade$\leq$3}, \textfn{YearsSinceLastDelqTrade$\leq$5}] must preserve thermometer encoding., which can only decrease.Actions can only turn off higher-level dummies that are on, where \textfn{YearsSinceLastDelqTrade$\leq$1} is the lowest-level dummy and \textfn{YearsSinceLastDelqTrade$\leq$5} is the highest-level-dummy.
\item ThermometerEncoding: Actions on [\textfn{MostRecentTradeWithinLast2Years}, \textfn{MostRecentTradeWithinLastYear}] must preserve thermometer encoding.
\item ThermometerEncoding: Actions on [\textfn{AvgYearsInFile$\geq$3}, \textfn{AvgYearsInFile$\geq$5}, \textfn{AvgYearsInFile$\geq$7}] must preserve thermometer encoding., which can only increase.Actions can only turn on higher-level dummies that are off, where \textfn{AvgYearsInFile$\geq$3} is the lowest-level dummy and \textfn{AvgYearsInFile$\geq$7} is the highest-level-dummy.
\item ThermometerEncoding: Actions on [\textfn{NetFractionRevolvingBurden$\geq$10}, \textfn{NetFractionRevolvingBurden$\geq$20}, \textfn{NetFractionRevolvingBurden$\geq$50}] must preserve thermometer encoding., which can only decrease.Actions can only turn off higher-level dummies that are on, where \textfn{NetFractionRevolvingBurden$\geq$10} is the lowest-level dummy and \textfn{NetFractionRevolvingBurden$\geq$50} is the highest-level-dummy.
\item ThermometerEncoding: Actions on [\textfn{NetFractionInstallBurden$\geq$10}, \textfn{NetFractionInstallBurden$\geq$20}, \textfn{NetFractionInstallBurden$\geq$50}] must preserve thermometer encoding., which can only decrease.Actions can only turn off higher-level dummies that are on, where \textfn{NetFractionInstallBurden$\geq$10} is the lowest-level dummy and \textfn{NetFractionInstallBurden$\geq$50} is the highest-level-dummy.
\item ThermometerEncoding: Actions on [\textfn{NumRevolvingTradesWBalance$\geq$2}, \textfn{NumRevolvingTradesWBalance$\geq$3}, \textfn{NumRevolvingTradesWBalance$\geq$5}, \textfn{NumRevolvingTradesWBalance$\geq$7}] must preserve thermometer encoding., which can only decrease.Actions can only turn off higher-level dummies that are on, where \textfn{NumRevolvingTradesWBalance$\geq$2} is the lowest-level dummy and \textfn{NumRevolvingTradesWBalance$\geq$7} is the highest-level-dummy.
\item ThermometerEncoding: Actions on [\textfn{NumRevolvingTrades$\geq$2}, \textfn{NumRevolvingTrades$\geq$3}, \textfn{NumRevolvingTrades$\geq$5}, \textfn{NumRevolvingTrades$\geq$7}] must preserve thermometer encoding., which can only decrease.Actions can only turn off higher-level dummies that are on, where \textfn{NumRevolvingTrades$\geq$2} is the lowest-level dummy and \textfn{NumRevolvingTrades$\geq$7} is the highest-level-dummy.
\item ThermometerEncoding: Actions on [\textfn{NumInstallTradesWBalance$\geq$2}, \textfn{NumInstallTradesWBalance$\geq$3}, \textfn{NumInstallTradesWBalance$\geq$5}, \textfn{NumInstallTradesWBalance$\geq$7}] must preserve thermometer encoding., which can only decrease.Actions can only turn off higher-level dummies that are on, where \textfn{NumInstallTradesWBalance$\geq$2} is the lowest-level dummy and \textfn{NumInstallTradesWBalance$\geq$7} is the highest-level-dummy.
\item ThermometerEncoding: Actions on [\textfn{NumInstallTrades$\geq$2}, \textfn{NumInstallTrades$\geq$3}, \textfn{NumInstallTrades$\geq$5}, \textfn{NumInstallTrades$\geq$7}] must preserve thermometer encoding., which can only decrease.Actions can only turn off higher-level dummies that are on, where \textfn{NumInstallTrades$\geq$2} is the lowest-level dummy and \textfn{NumInstallTrades$\geq$7} is the highest-level-dummy.
\end{constraints}
\clearpage
\subsection{Actionability Constraints for the \textds{givemecredit} Dataset} 

We present a list of all features and their separable actionability constraints in \cref{Table::IndividualActionSetGiveMeCredit}.
\begin{table}[!h]
\centering
\fontsize{9pt}{9pt}\selectfont
\resizebox{0.75\linewidth}{!}{\begin{tabular}{llllll}
\toprule
\textheader{Name} & \textheader{Type} & \textheader{LB} & \textheader{UB} & \textheader{Actionability} & \textheader{Sign} \\
\midrule
\textfn{Age} & $\mathbb{Z}$ & 21 & 103 & No &  \\
\textfn{NumberOfDependents} & $\mathbb{Z}$ & 0 & 20 & No &  \\
\textfn{DebtRatio$\geq$1} & $\{0,1\}$ & 0 & 1 & Yes & + \\
\textfn{MonthlyIncome$\geq$3K} & $\{0,1\}$ & 0 & 1 & Yes & + \\
\textfn{MonthlyIncome$\geq$5K} & $\{0,1\}$ & 0 & 1 & Yes & + \\
\textfn{MonthlyIncome$\geq$10K} & $\{0,1\}$ & 0 & 1 & Yes & + \\
\textfn{CreditLineUtilization$\geq$10} & $\{0,1\}$ & 0 & 1 & Yes &  \\
\textfn{CreditLineUtilization$\geq$20} & $\{0,1\}$ & 0 & 1 & Yes &  \\
\textfn{CreditLineUtilization$\geq$50} & $\{0,1\}$ & 0 & 1 & Yes &  \\
\textfn{CreditLineUtilization$\geq$70} & $\{0,1\}$ & 0 & 1 & Yes &  \\
\textfn{CreditLineUtilization$\geq$100} & $\{0,1\}$ & 0 & 1 & Yes &  \\
\textfn{AnyRealEstateLoans} & $\{0,1\}$ & 0 & 1 & Yes & + \\
\textfn{MultipleRealEstateLoans} & $\{0,1\}$ & 0 & 1 & Yes & + \\
\textfn{AnyCreditLinesAndLoans} & $\{0,1\}$ & 0 & 1 & Yes & + \\
\textfn{MultipleCreditLinesAndLoans} & $\{0,1\}$ & 0 & 1 & Yes & + \\
\textfn{HistoryOfLatePayment} & $\{0,1\}$ & 0 & 1 & No &  \\
\textfn{HistoryOfDelinquency} & $\{0,1\}$ & 0 & 1 & No &  \\
\bottomrule
\end{tabular}
}
\caption{Separable actionability constraints for the \textds{givemecredit} dataset.}
\label{Table::IndividualActionSetGiveMeCredit}
\end{table}

The non-separable actionability constraints for this dataset include:
\begin{constraints}
\item ThermometerEncoding: Actions on [\textfn{MonthlyIncome$\geq$3K}, \textfn{MonthlyIncome$\geq$5K}, \textfn{MonthlyIncome$\geq$10K}] must preserve thermometer encoding., which can only increase.Actions can only turn on higher-level dummies that are off, where \textfn{MonthlyIncome$\geq$3K} is the lowest-level dummy and \textfn{MonthlyIncome$\geq$10K} is the highest-level-dummy.
\item ThermometerEncoding: Actions on [\textfn{CreditLineUtilization$\geq$10}, \textfn{CreditLineUtilization$\geq$20}, \textfn{CreditLineUtilization$\geq$50}, \textfn{CreditLineUtilization$\geq$70}, \textfn{CreditLineUtilization$\geq$100}] must preserve thermometer encoding., which can only decrease.Actions can only turn off higher-level dummies that are on, where \textfn{CreditLineUtilization$\geq$10} is the lowest-level dummy and \textfn{CreditLineUtilization$\geq$100} is the highest-level-dummy.
\item ThermometerEncoding: Actions on [\textfn{AnyRealEstateLoans}, \textfn{MultipleRealEstateLoans}] must preserve thermometer encoding., which can only decrease.Actions can only turn off higher-level dummies that are on, where \textfn{AnyRealEstateLoans} is the lowest-level dummy and \textfn{MultipleRealEstateLoans} is the highest-level-dummy.
\item ThermometerEncoding: Actions on [\textfn{AnyCreditLinesAndLoans}, \textfn{MultipleCreditLinesAndLoans}] must preserve thermometer encoding., which can only decrease.Actions can only turn off higher-level dummies that are on, where \textfn{AnyCreditLinesAndLoans} is the lowest-level dummy and \textfn{MultipleCreditLinesAndLoans} is the highest-level-dummy.
\end{constraints}

\clearpage
\subsection{Actionability Constraints for the \textds{twitterbot} Dataset} 

We present a list of all features and their separable actionability constraints in \cref{Table::IndividualActionSetTwitterbot}.
\begin{table}[!h]
\centering
\fontsize{9pt}{9pt}\selectfont
\resizebox{0.75\linewidth}{!}{\begin{tabular}{llllll}
\toprule
\textheader{Name} & \textheader{Type} & \textheader{LB} & \textheader{UB} & \textheader{Actionability} & \textheader{Sign} \\
\midrule
\textfn{SourceAutomation} & $\{0,1\}$ & 0 & 1 & No &  \\
\textfn{SourceOther} & $\{0,1\}$ & 0 & 1 & No &  \\
\textfn{SourceBranding} & $\{0,1\}$ & 0 & 1 & No &  \\
\textfn{SourceMobile} & $\{0,1\}$ & 0 & 1 & No &  \\
\textfn{SourceWeb} & $\{0,1\}$ & 0 & 1 & No &  \\
\textfn{SourceApp} & $\{0,1\}$ & 0 & 1 & No &  \\
\textfn{FollowerFriendRatio$\geq$1} & $\{0,1\}$ & 0 & 1 & No &  \\
\textfn{FollowerFriendRatio$\geq$10} & $\{0,1\}$ & 0 & 1 & No &  \\
\textfn{FollowerFriendRatio$\geq$100} & $\{0,1\}$ & 0 & 1 & No &  \\
\textfn{FollowerFriendRatio$\geq$1000} & $\{0,1\}$ & 0 & 1 & No &  \\
\textfn{FollowerFriendRatio$\geq$10000} & $\{0,1\}$ & 0 & 1 & No &  \\
\textfn{FollowerFriendRatio$\geq$100000} & $\{0,1\}$ & 0 & 1 & No &  \\
\textfn{AgeOfAccountInDays$\geq$365} & $\{0,1\}$ & 0 & 1 & Yes &  \\
\textfn{AgeOfAccountInDays$\geq$730} & $\{0,1\}$ & 0 & 1 & Yes &  \\
\textfn{UserReplied$\geq$10} & $\{0,1\}$ & 0 & 1 & Yes &  \\
\textfn{UserReplied$\geq$100} & $\{0,1\}$ & 0 & 1 & Yes &  \\
\textfn{UserFavourited$\geq$1000} & $\{0,1\}$ & 0 & 1 & Yes &  \\
\textfn{UserFavourited$\geq$10000} & $\{0,1\}$ & 0 & 1 & Yes &  \\
\textfn{UserRetweeted$\geq$1} & $\{0,1\}$ & 0 & 1 & Yes &  \\
\textfn{UserRetweeted$\geq$10} & $\{0,1\}$ & 0 & 1 & Yes &  \\
\textfn{UserRetweeted$\geq$100} & $\{0,1\}$ & 0 & 1 & Yes &  \\
\bottomrule
\end{tabular}
}
\caption{Separable actionability constraints for the \textds{Twitterbot} dataset.}
\label{Table::IndividualActionSetTwitterbot}
\end{table}

The non-separable actionability constraints for this dataset include:
\begin{constraints}
\item ThermometerEncoding: Actions on [\textfn{FollowerFriendRatio$\geq$1}, \textfn{FollowerFriendRatio$\geq$10}, \textfn{FollowerFriendRatio$\geq$100}, \textfn{FollowerFriendRatio$\geq$1000}, \textfn{FollowerFriendRatio$\geq$10000}, 

\textfn{FollowerFriendRatio$\geq$100000}] must preserve thermometer encoding.
\item ThermometerEncoding: Actions on [\textfn{AgeOfAccountInDays$\geq$365}, \textfn{AgeOfAccountInDays$\geq$730}] must preserve thermometer encoding.
\item ThermometerEncoding: Actions on [\textfn{UserReplied$\geq$10}, \textfn{UserReplied$\geq$100}] must preserve thermometer encoding.
\item ThermometerEncoding: Actions on [\textfn{UserFavourited$\geq$1000}, \textfn{UserFavourited$\geq$10000}] must preserve thermometer encoding.
\item ThermometerEncoding: Actions on [\textfn{UserRetweeted$\geq$1}, \textfn{UserRetweeted$\geq$10}, \textfn{UserRetweeted$\geq$100}] must preserve thermometer encoding.
\end{constraints}

\subsection{Computing Infrastructure}
We run all experiments on a personal computer with an Apple M1 Pro chip and 32 GB of RAM. All MILP and MIQCP problems were solved using Gurobi 9.0 \cite{achterberg2019gurobi} with default settings.

\subsection{Additional Results}
\cref{tab:full_results} shows an extended version of our main results that include three additional metrics:
\begin{itemize}
    \item \textbf{Realized Blindspot Rate}: The fraction of total regions that are predicted to be responsive but contain individuals with fixed predictions in the test dataset.
    \item \textbf{Realized Loophole Rate}: The fraction of total regions that are predicted to be confined but contain individuals with recourse in the test dataset.
    \item \textbf{Computation Time}: The average computation time in seconds over all regions for each dataset.
\end{itemize}

As expected, the \emph{realized} blindspot and loophole rates are lower than the true blindspot and loophole rate. Recourse verification over regions safeguards against rare events (i.e., new individuals with fixed predictions that were not part of the training dataset), which makes the risks less likely to be realized over a small test dataset. However in real-world applications, models are typically deployed and make predictions on datasets much larger than its original training dataset --- increasing the probability of blindspots and loopholes being realized. 

Our extended results also highlight that our approach runs incredibly quickly across all three datasets, auditing a region in under 5 seconds on average. In two out of three datasets, our approach is quicker than the \baseline{Region} which runs pointwise recourse on 100 data points.

\begin{table}[t!]
    \centering
    \resizebox{0.7\textwidth}{!}{\begin{tabular}{@{}llllll@{}}
 & & \multicolumn{3}{c}{\baseline{PointWise}} & \\ \cmidrule{3-5}
\textbf{Dataset} & \textbf{Metrics} & \baseline{Data} & \baseline{Region} & \baseline{Score} & \us{}\\
\toprule
\multirow[c]{9}{*}{\ficoinfo{}} & Certifies Responsive & --- & --- & --- & 54.2\% \\
 & Outputs Responsive & 91.6\% & 66.5\% & 71.0\% & 54.2\% \\
 & \sublevel~Blindspot & \textcolor{\pitfall}{37.4\%} & \textcolor{\pitfall}{12.3\%} & \textcolor{\pitfall}{16.8\%} & \textbf{0.0\%} \\
 & \sublevel~ Realized Blindspot & \textcolor{\pitfall}{1.9\%} & \textbf{0.0\%} & \textbf{0.0\%} & \textbf{0.0\%} \\
 & Certifies Confined & --- & --- & --- & 0.0\% \\
 & Outputs Confined & 0.6\% & 0.0\% & 0.0\% & 0.0\% \\
 & \sublevel~Loophole & \textcolor{\pitfall}{0.6\%} & \textbf{0.0\%} & \textbf{0.0\%} & \textbf{0.0\%} \\
 & \sublevel~Realized Loophole & \textbf{0.0\%} & \textbf{0.0\%} & \textbf{0.0\%} & \textbf{0.0\%} \\
 & Comp. Time (s) & 0.05(0.0) & 0.74(0.1) & 0.02(0.0) & 4.67(3.6) \\
\cmidrule{1-6}
\multirow[c]{9}{*}{\givemecreditinfo{}} & Certifies Responsive & --- & --- & --- & 60.1\% \\
 & Outputs Responsive & 72.2\% & 60.1\% & 62.9\% & 60.1\% \\
 & \sublevel~Blindspot & \textcolor{\pitfall}{12.0\%} & \textbf{0.0\%} & \textcolor{\pitfall}{2.8\%} & \textbf{0.0\%} \\
 & \sublevel~ Realized Blindspot & \textcolor{\pitfall}{3.1\%} & \textbf{0.0\%} & \textcolor{\pitfall}{1.0\%} & \textbf{0.0\%} \\
 & Certifies Confined & --- & --- & --- & 18.3\% \\
 & Outputs Confined & 19.4\% & 19.2\% & 19.2\% & 18.3\% \\
 & \sublevel~Loophole & \textcolor{\pitfall}{1.1\%} & \textcolor{\pitfall}{0.8\%} & \textcolor{\pitfall}{0.8\%} & \textbf{0.0\%} \\
 & \sublevel~Realized Loophole & \textcolor{\pitfall}{0.3\%} & \textbf{0.0\%} & \textbf{0.0\%} & \textbf{0.0\%} \\
 & Comp. Time (s) & 0.02(0.1) & 0.32(0.1) & 0.01(0.0) & 0.13(0.1) \\
\cmidrule{1-6}
\multirow[c]{9}{*}{\twitterbotinfo{}} & Certifies Responsive & --- & --- & --- & 25.0\% \\
 & Outputs Responsive & 40.0\% & 25.0\% & 25.0\% & 25.0\% \\
 & \sublevel~Blindspot & \textcolor{\pitfall}{15.0\%} & \textbf{0.0\%} & \textbf{0.0\%} & \textbf{0.0\%} \\
 & \sublevel~ Realized Blindspot & \textcolor{\pitfall}{15.0\%} & \textbf{0.0\%} & \textbf{0.0\%} & \textbf{0.0\%} \\
 & Certifies Confined & --- & --- & --- & 5.0\% \\
 & Outputs Confined & 25.0\% & 25.0\% & 25.0\% & 5.0\% \\
 & \sublevel~Loophole & \textcolor{\pitfall}{20.0\%} & \textcolor{\pitfall}{20.0\%} & \textcolor{\pitfall}{20.0\%} & \textbf{0.0\%} \\
 & \sublevel~Realized Loophole & \textbf{0.0\%} & \textbf{0.0\%} & \textbf{0.0\%} & \textbf{0.0\%} \\
 & Comp. Time (s) & 0.02(0.1) & 0.36(0.1) & 0.01(0.0) & 0.07(0.2) \\
\cmidrule{1-6}
\end{tabular}
}
    \caption{Overview of results for all datasets, regions, and methods. For each dataset, we include the number of regions we audit ($|\Omega|$), and the fraction of data points with fixed predictions ($p$). }
    \vspace{-1em}
    \label{tab:full_results}
\end{table}

\section{On Out-of-Sample Robustness} \label{app:distribution_shift}
Auditing recourse over regions, rather than individuals, allows practitioners to find individuals with fixed predictions beyond a training dataset and is robust to distribution shifts. We demonstrate this capability by considering \emph{realized blindspots}, regions that are predicted to be responsive but contain individuals with fixed predictions in the test distribution. We evaluate the performance of the \baseline{Data} baseline, which certifies recourse by looking at individuals within a training dataset, in two regimes: (1) where the test distribution is the same as the train distribution of data, (2) where the train distribution is more likely to include individuals predicted to receive the desirable outcome. We simulate this distribution shift by training a logistic regression classifier on the entire dataset that predicts the likelihood of receiving the desirable outcome. We then construct the training dataset by sampling individuals with a probability proportional to their predicted score in the linear classifier using a soft-max function with a temperature of 1. In Figure \ref{fig:outofsample} we plot the realized blindspot rate for the \baseline{Data} baseline in all datasets with and without distribution shift. Our results show that \emph{even under the same distribution} the \baseline{Data} baseline can fail to catch instances of fixed predictions in the test dataset. This problem is further exacerbated by distribution shift, with the realized blindspot rate increasing across all three datasets. Note that by design \us{} has 0 realized blindspots because the regions themselves remain fixed under both train and test. Overall, these results highlight the importance of auditing regions as a tool to robustly foresee future fixed predictions, even under distribution shift.

\begin{figure}[ht]
      \centering
      \includegraphics[width=0.49\textwidth]{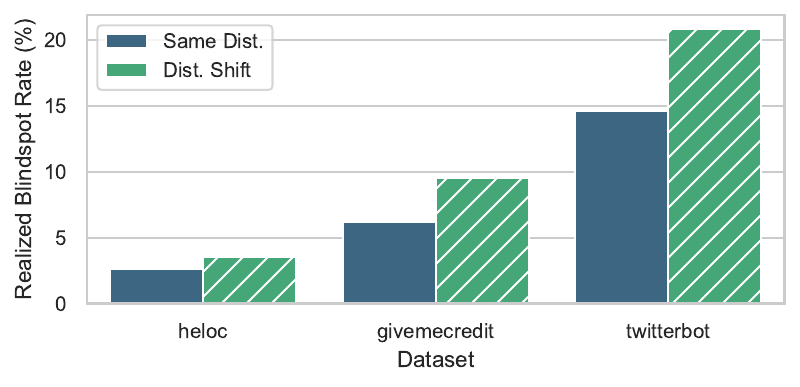}
  \caption {\label{fig:outofsample}  Realized blindspot rate for \baseline{Data} baseline with and without a distribution shift. Realized blindspot rate is the percentage of regions predicted to be responsive that contain individuals with fixed predictions in test dataset.}
\end{figure}

\section{Supplementary Material for Case Study on Pennsylvania SRAI} \label{app:case_study}

In this section we present all the actionability constraints for the Pennsylvania SRAI. Note that the feature space ${\cal X}$ for this setting has the same upper and lower bounds as well as non-separable constraints as the action set. For this case study, we audit a box region that contains any possible offender.

\subsection{Actionability Constraints for the SRAI} 

We show a list of all features and their separable actionability constraints in \cref{Table::IndividualActionSetOGS}.
\begin{table}[!h]
\centering
\fontsize{9pt}{9pt}\selectfont
\resizebox{0.75\linewidth}{!}{\begin{tabular}{llllll}
\toprule
\textheader{Name} & \textheader{Type} & \textheader{LB} & \textheader{UB} & \textheader{Actionability} & \textheader{Sign} \\
\midrule
\textfn{GenderMale} & $\{0,1\}$ & 0 & 1 & No &  \\
\textfn{GenderFemale} & $\{0,1\}$ & 0 & 1 & No &  \\
\textfn{AgeUnder21:} & $\{0,1\}$ & 0 & 1 & No &  \\
\textfn{Age21-25} & $\{0,1\}$ & 0 & 1 & No &  \\
\textfn{Age26-29} & $\{0,1\}$ & 0 & 1 & No &  \\
\textfn{Age30-39} & $\{0,1\}$ & 0 & 1 & No &  \\
\textfn{Age40-49} & $\{0,1\}$ & 0 & 1 & No &  \\
\textfn{Age50+} & $\{0,1\}$ & 0 & 1 & No &  \\
\textfn{CurrentConvictionMurder} & $\{0,1\}$ & 0 & 1 & Yes &  \\
\textfn{CurrentConvictionPerson-Felony} & $\{0,1\}$ & 0 & 1 & Yes &  \\
\textfn{CurrentConvictionPerson-Misd.} & $\{0,1\}$ & 0 & 1 & Yes &  \\
\textfn{CurrentConvictionSex-Felony} & $\{0,1\}$ & 0 & 1 & Yes &  \\
\textfn{CurrentConvictionSex-Misd.} & $\{0,1\}$ & 0 & 1 & Yes &  \\
\textfn{CurrentConvictionBurglary} & $\{0,1\}$ & 0 & 1 & Yes &  \\
\textfn{CurrentConvictionProperty-Felony} & $\{0,1\}$ & 0 & 1 & Yes &  \\
\textfn{CurrentConvictionProperty-Misd.} & $\{0,1\}$ & 0 & 1 & Yes &  \\
\textfn{CurrentConvictionDrug-Felony} & $\{0,1\}$ & 0 & 1 & Yes &  \\
\textfn{CurrentConvictionDrug-Misd} & $\{0,1\}$ & 0 & 1 & Yes &  \\
\textfn{CurrentConvictionPublic-Admin.} & $\{0,1\}$ & 0 & 1 & Yes &  \\
\textfn{CurrentConvictionPublic-Order.} & $\{0,1\}$ & 0 & 1 & Yes &  \\
\textfn{CurrentConvictionFirearms} & $\{0,1\}$ & 0 & 1 & Yes &  \\
\textfn{CurrentConvictionOther Weapons} & $\{0,1\}$ & 0 & 1 & Yes &  \\
\textfn{CurrentConvictionOther} & $\{0,1\}$ & 0 & 1 & Yes &  \\
\textfn{NumPriorConvictionsNone} & $\{0,1\}$ & 0 & 1 & Yes &  \\
\textfn{NumPriorConvictions1} & $\{0,1\}$ & 0 & 1 & Yes &  \\
\textfn{NumPriorConvictions2-3} & $\{0,1\}$ & 0 & 1 & Yes &  \\
\textfn{NumPriorConvictions4-5} & $\{0,1\}$ & 0 & 1 & Yes &  \\
\textfn{NumPriorConvictions5+} & $\{0,1\}$ & 0 & 1 & Yes &  \\
\textfn{PriorConvictionPerson/Sex} & $\{0,1\}$ & 0 & 1 & Yes &  \\
\textfn{PriorConvictionProperty} & $\{0,1\}$ & 0 & 1 & Yes &  \\
\textfn{PriorConvictionDrug} & $\{0,1\}$ & 0 & 1 & Yes &  \\
\textfn{PriorConvictionPublicOrder} & $\{0,1\}$ & 0 & 1 & Yes &  \\
\textfn{PriorConvictionPublicAdmin} & $\{0,1\}$ & 0 & 1 & Yes &  \\
\textfn{PriorConvictionDUI} & $\{0,1\}$ & 0 & 1 & Yes &  \\
\textfn{PriorConvictionFirearm} & $\{0,1\}$ & 0 & 1 & Yes &  \\
\textfn{MultipleCurrentConvictions} & $\{0,1\}$ & 0 & 1 & Yes &  \\
\textfn{PriorJuvenileAdjudication} & $\{0,1\}$ & 0 & 1 & Yes &  \\
\bottomrule
\end{tabular}
}
\caption{Separable actionability constraints for the Pennsylvania SRAI.}
\label{Table::IndividualActionSetOGS}
\end{table}

The non-separable actionability constraints for this dataset include:
\begin{constraints}
\item OneHotEncoding: Actions on [\textfn{AgeUnder21}, \textfn{Age21-25}, \textfn{Age26-29}, \textfn{Age30-39}, \textfn{Age40-49}, \textfn{Age50+}] must preserve one-hot encoding of Age. Exactly 1 of [\textfn{AgeUnder21}, \textfn{Age21-25}, \textfn{Age26-29}, \textfn{Age30-39}, \textfn{Age40-49}, \textfn{Age50+}] must be TRUE
\item OneHotEncoding: Actions on [\textfn{GenderMale}, \textfn{GenderFemale}] must preserve one-hot encoding of Gender. Exactly 1 of [\textfn{GenderMale}, \textfn{GenderFemale}] must be TRUE
\item Actions on [\textfn{NumPriorConvictionsNone}, \textfn{NumPriorConvictions2-3}, \textfn{NumPriorConvictions4-5}, \textfn{NumPriorConvictions5+}] must preserve one-hot encoding of NumPriorConvictions. Exactly 1 of [\textfn{NumPriorConvictionsNone}, \textfn{NumPriorConvictions2-3}, \textfn{NumPriorConvictions4-5}, \textfn{NumPriorConvictions5+}] must be TRUE
\item LogicalConstraint: If \textfn{NumPriorConvictionsNone} is True then any \textfn{PriorConviction}] variable must be False.
\end{constraints}

\end{document}